\newtheorem{theorem}{Theorem}[section]
\newtheorem{lemma}[theorem]{Lemma}
\newtheorem{definition}[theorem]{Definition}
\newcounter{alphasect}
\def\alphainsection{0}
\let\oldsection=\section
\def\section{
  \ifnum\alphainsection=1
    \addtocounter{alphasect}{1}
  \fi
\oldsection}
\renewcommand\thesection{\ifnum\alphainsection=1\Alph{alphasect}\else\arabic{section}\fi}
\newenvironment{alphasection}{
  \ifnum\alphainsection=1
    \errhelp={Let other blocks end at the beginning of the next block.}
    \errmessage{Nested Alpha section not allowed}
  \fi
  \setcounter{alphasect}{0}
  \def\alphainsection{1}
}{
  \setcounter{alphasect}{0}
  \def\alphainsection{0}
}
\title{Exploration by Maximizing R\'enyi Entropy for Reward-Free RL Framework}
\author {
    Chuheng Zhang\protect\thanks{These authors contributed equally to this work.} \quad
    Yuanying Cai\footnotemark[1] \quad
    Longbo Huang \quad
    Jian Li \\
}
\begin{document}
\maketitle

\begin{abstract}

Exploration is essential for reinforcement learning (RL).
To face the challenges of exploration, we consider a reward-free RL framework that completely separates exploration from exploitation and brings new challenges for exploration algorithms.
In the exploration phase, the agent learns an exploratory policy by interacting with a reward-free environment
and collects a dataset of transitions by executing the policy.
In the planning phase, the agent computes a good policy for any reward function based on the dataset without further interacting with the environment.
This framework is suitable for the meta RL setting where there are many reward functions of interest.
In the exploration phase, we propose to maximize the R\'enyi entropy over the state-action space 
and justify this objective theoretically.
The success of using R\'enyi entropy as the objective results from its encouragement to explore the hard-to-reach state-actions.
We further deduce a policy gradient formulation for this objective and design a practical exploration algorithm that can deal with complex environments.
In the planning phase, we solve for good policies given arbitrary reward functions using a batch RL algorithm.
Empirically, we show that our exploration algorithm is effective and sample efficient, and results in superior policies for arbitrary reward functions in the planning phase.

\end{abstract}

\section{Introduction}

The trade-off between exploration and exploitation is at the core of reinforcement learning (RL). 
Designing efficient exploration algorithm, while being a highly nontrivial
task, is essential to the success in many RL tasks \cite{burda2018exploration,ecoffet2019go}.
Hence, it is natural to ask the following high-level question: \emph{What can we achieve by pure exploration?}
To address this question, several settings related to meta reinforcement learning (meta RL) have been proposed 
(see e.g., \citealt{wang2016learning,duan2016rl,finn2017model}). 
One common setting in meta RL is to learn a model in a reward-free environment in the meta-training phase, and use the learned model as the initialization
to fast adapt for new tasks in the meta-testing phase \cite{eysenbach2018diversity,gupta2018unsupervised,nagabandi2018learning}.
Since the agent still needs to explore the environment under the new tasks in the meta-testing phase (sometimes it may need more new samples in some new task, and sometimes less), it is less clear how to evaluate the effectiveness of the exploration in the meta-training phase.
Another setting is to learn a policy in a reward-free environment and test the policy under the task with a specific reward function (such as the score in Montezuma's Revenge) without further training with the task \cite{burda2018exploration,ecoffet2019go,burda2018large}.
However, there is no guarantee that the algorithm has fully explored the transition dynamics of the environment
unless we test the learned model for arbitrary reward functions.
Recently, \citet{jin2020reward} proposed \emph{reward-free RL framework} that fully decouples exploration and exploitation.
Further, they designed a provably efficient algorithm that conducts a finite number of steps of reward-free exploration and returns near-optimal policies for arbitrary reward functions.
However, their algorithm is designed for the tabular case and can hardly be extended to continuous or high-dimensional state spaces since they construct a policy for each state.

The reward-free RL framework is as follows:
First, a set of policies are trained to explore the dynamics of the reward-free environment in the exploration phase (i.e., the meta-training phase). 
Then, a dataset of trajectories is collected by executing the learned exploratory policies. 
In the planning phase (i.e., the meta-testing phase), an arbitrary reward function is specified and a batch RL algorithm \cite{lange2012batch,fujimoto2018off} is applied to solve for a good policy solely based on the dataset, without further interaction with the environment.
This framework is suitable to the scenarios when there are many reward functions of interest or the reward is designed offline to elicit desired behavior.
The key to success in this framework is to obtain a dataset with good coverage over all possible situations in the environment with as few samples as possible, which in turn requires the exploratory policy to fully explore the environment.

Several methods that encourage various forms of exploration have been developed
in the reinforcement learning literature. 
The maximum entropy framework \cite{haarnoja2017reinforcement} maximizes the cumulative reward in the meanwhile maximizing the entropy over \emph{the action space} conditioned on each state.
This framework results in several efficient and robust algorithms, such as soft Q-learning \cite{schulman2017equivalence, nachum2017bridging}, SAC \cite{haarnoja2018soft} and MPO
\cite{abdolmaleki2018maximum}. 
On the other hand, maximizing \emph{the state space} coverage may results in better exploration. 
Various kinds of objectives/regularizations are used for better exploration
of the state space, including 
information-theoretic metrics \cite{houthooft2016vime,mohamed2015variational,eysenbach2018diversity} 
(especially the entropy of the state space, e.g.,  \citet{hazan2018provably,islam2019entropy}),
the prediction error of a dynamical model \cite{burda2018large,pathak2017curiosity,de2018curiosity}, 
the state visitation count \cite{burda2018exploration,bellemare2016unifying,ostrovski2017count} or 
others heuristic signals such as novelty \cite{lehman2011novelty,conti2018improving}, surprise \cite{achiam2017surprise} or curiosity \cite{schmidhuber1991possibility}.

In practice, it is desirable to obtain a single exploratory policy instead of a set of policies whose cardinality may be as large as that of the state space \citep[e.g.,][]{jin2020reward,misra2019kinematic}.
Our exploration algorithm outputs a single exploratory policy for the reward-free RL framework by maximizing \emph{the R\'enyi entropy over the state-action space} in the exploration phase. 
In particular, we demonstrate the advantage of using
state-action space, instead of the state space via a very simple example
(see Section \ref{sec:objective} and Figure \ref{fig:five-state}). 
Moreover, R\'enyi entropy generalizes a family of entropies, including the commonly used Shannon entropy.
We justify the use of R\'enyi entropy as the objective function theoretically by providing an upper bound on the number of samples in the dataset to ensure that a near-optimal policy is obtained for any reward function in the planning phase.

Further, we derive a gradient ascent update rule for maximizing the R\'enyi entropy over the state-action space. 
The derived update rule is similar to the vanilla policy gradient update with the reward replaced by a function of the discounted stationary state-action distribution of the current policy.
We use variational autoencoder (VAE) \cite{kingma2013auto} as the density model to estimate the distribution.
The corresponding reward changes over iterations which makes it hard to accurately estimate a value function under the current reward.
To address this issue, we propose to estimate a state value function using the off-policy data with the reward relabeled by the current density model.
This enables us to efficiently update the policy in a stable way using PPO \cite{schulman2017proximal}.
Afterwards, we collect a dataset by executing the learned policy.
In the planning phase, when a reward function is specified, we augment the dataset with the rewards and use a batch RL algorithm, batch constrained deep Q-learning (BCQ) \cite{fujimoto2018off,fujimoto2019benchmarking}, to plan for a good policy under the reward function. 
We conduct experiments on 
several
environments with discrete, continuous or high-dimensional state spaces.
The experiment results indicate that our algorithm is effective, sample efficient and robust in the exploration phase, and achieves good performance under the reward-free RL framework.

Our contributions are summarized as follows:

\begin{itemize}
    \item (Section \ref{sec:objective}) We consider the reward-free RL framework that separates exploration from exploitation and therefore places a higher requirement for an exploration algorithm. 
    To efficiently explore under this framework, we propose a novel objective that maximizes the R\'enyi entropy over the state-action space in the exploration phase and justify this objective theoretically.
    \item (Section \ref{sec:method}) We propose a practical algorithm based on a derived policy gradient formulation to maximize the R\'enyi entropy over the state-action space for the reward-free RL framework.
    \item (Section \ref{sec:experiment}) We conduct a wide range of experiments and the results indicate that our algorithm is efficient and robust in the exploration phase and results in superior performance in the downstream planning phase for arbitrary reward functions.
\end{itemize}

\section{Preliminary}
\label{sec:prelim}

A reward-free environment can be formulated as a controlled Markov process (CMP)
\footnote{
Although some literature use CMP to refer to MDP, we use CMP to denote a Markov process 
\cite{silver2015lecture} 
equipped with a control structure in this paper.
}
$(\mathcal{S}, \mathcal{A}, \mathbb{P}, \mu, \gamma)$ which specifies the state space $\mathcal{S}$ with $S=|\mathcal{S}|$, the action space $\mathcal{A}$ with $A=|\mathcal{A}|$, the transition dynamics $\mathbb{P}$, 
the initial state distribution $\mu\in\Delta^{\mathcal{S}}$ and the discount factor $\gamma$. 
A (stationary) policy $\pi_\theta:\mathcal{S} \to \Delta^\mathcal{A}$ parameterized by $\theta$ specifies the probability of choosing the actions on each state. 
The stationary discounted state visitation distribution (or simply the state distribution) under the policy $\pi$ is defined as $d_\mu^\pi(s):=(1-\gamma)\sum_{t=0}^\infty \gamma^t \text{Pr}(s_t=s|s_0 \sim \mu; \pi)$. 
The stationary discounted state-action visitation distribution (or simply the state-action distribution) under the policy $\pi$ is defined as $d_\mu^\pi(s, a):=(1-\gamma)\sum_{t=0}^\infty \gamma^t \text{Pr}(s_t=s, a_t=a|s_0 \sim \mu; \pi)$. Unless otherwise stated, we use $d_\mu^\pi$ to denote the state-action distribution. 
We also use $d_{s_0,a_0}^\pi$ to denote the distribution started from the state-action pair $(s_0, a_0)$ instead of the initial state distribution $\mu$. 

When a reward function $r: \mathcal{S}\times\mathcal{A} \to \mathbb{R}$ is specified, CMP becomes the Markov decision process (MDP) \cite{sutton2018introduction} $(\mathcal{S}, \mathcal{A}, \mathbb{P}, r, \mu, \gamma)$. 
The objective for MDP is to find a policy $\pi_\theta$ that maximizes the expected cumulative reward 
$J(\theta;r) := \mathbb{E}_{s_0 \sim \mu}[V^{\pi_\theta}(s_0;r)]$,
where 
$V^\pi(s;r)=\mathbb{E}[\sum_{t=0}^\infty \gamma^t r(s_t, a_t)|s_0=s; \pi]$ 
is the state value function.
We indicate the dependence on $r$ to emphasize that there may be more than one reward function of interest.
The policy gradient for this objective is $\nabla_\theta J(\theta; r) = \mathbb{E}_{(s,a)\sim d^{\pi_\theta}_\mu} [\nabla_\theta \log \pi_\theta(a|s) Q^{\pi_\theta} (s,a;r)]$ \cite{williams1992simple}, 
where
$Q^\pi(s,a;r)=\mathbb{E}[\sum_{t=0}^\infty \gamma^t r(s_t, a_t)|s_0=s, a_0=a; \pi] = \frac{1}{1-\gamma}\langle d_{s,a}^\pi, r \rangle$ 
is the Q function.

R\'enyi entropy for a distribution $d\in\Delta^\mathcal{X}$ is defined as $H_\alpha(d) := \frac{1}{1-\alpha} \log\left( \sum_{x\in\mathcal{X}} d^\alpha(x) \right)$, where $d(x)$ is the probability mass or the probability density function on $x$
(and the summation becomes integration in the latter case).
When $\alpha\to 0$, R\'enyi entropy becomes Hartley entropy and equals the logarithm of the size of the support of $d$.
When $\alpha\to 1$, R\'enyi entropy becomes Shannon entropy $H_1(d):=-\sum_{x\in\mathcal{X}} d(x) \log d(x)$
\cite{bromiley2004shannon,sanei2016renyi}.



Given a distribution $d\in\Delta^\mathcal{X}$, the corresponding density model $P_\phi: \mathcal{X} \to \mathbb{R}$ parameterized by $\phi$ gives the probability density estimation of $d$ based on the samples drawn from $d$. 
Variational auto-encoder (VAE) \cite{kingma2013auto} is a popular density model which maximizes the variational lower bound (ELBO) of the log-likelihood. 
Specifically, VAE maximizes the lower bound of $\mathbb{E}_{x\sim d}[\log P_\phi (x)]$, i.e.,  $\max_\phi \mathbb{E}_{x\sim d, z\sim q_\phi(\cdot|x)}[\log p_\phi (x|z)] - \mathbb{E}_{x\sim d} [D_\text{KL}(q_\phi(\cdot|x)||p(z))]$, where $q_\phi$ and $p_\phi$ are the decoder and the encoder respectively and $p(z)$ is a prior distribution for the latent variable $z$.

\section{The objective for the exploration phase}
\label{sec:objective}

The objective for the exploration phase under the reward-free RL framework is to induce an \emph{informative} and \emph{compact} dataset: 
The informative condition is that the dataset should have good coverage such that the planning phase generates good policies for arbitrary reward functions.
The compact condition is that the size of the dataset should be as small as possible to ensure a successful planning. 
In this section, we show that the R\'enyi entropy over the state-action space (i.e., $H_\alpha(d_\mu^\pi)$) is a good objective function for the exploration phase.
We first demonstrate the advantage of maximizing the state-action space entropy over maximizing the state space entropy with a toy example.
Then, we provide a motivation to use R\'enyi entropy by analyzing a deterministic setting.
At last, we provide an upper bound on the number of samples needed in the dataset for a successful planning if we have access to a policy that maximizes the R\'enyi entropy over the state-action space.
For ease of analysis, we assume the state-action space is discrete in this section and derive a practical algorithm that deals with continuous state-action space in the next section.

\begin{figure}[tbp]
   \centering
   \includegraphics[width=\columnwidth]{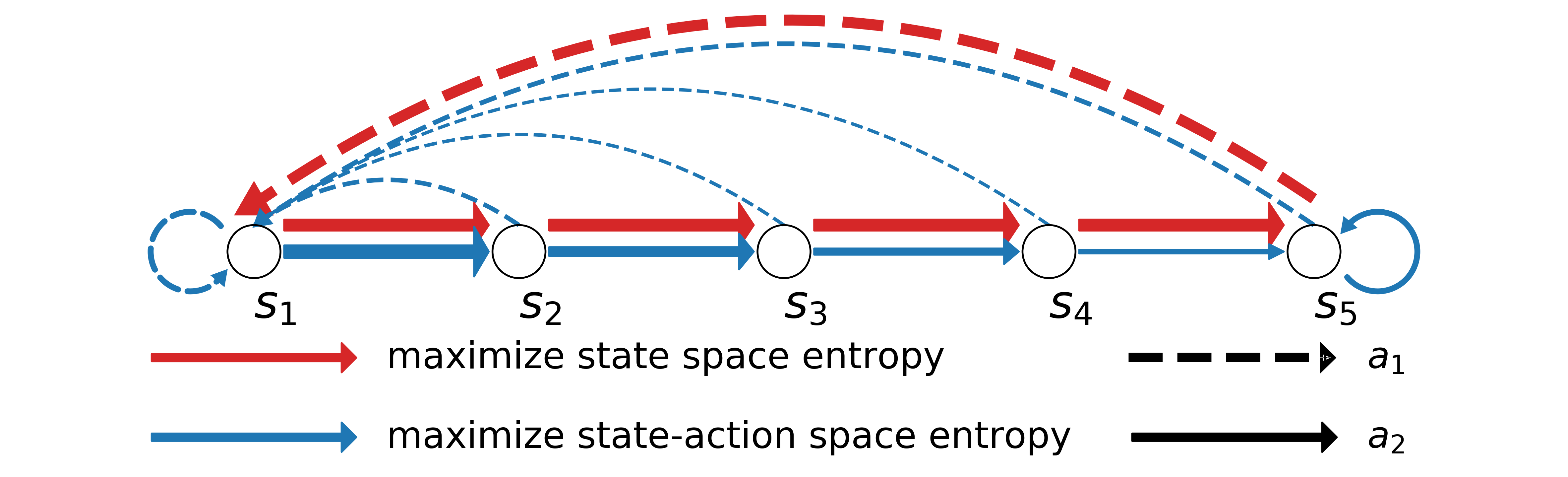}    
   \caption{
   A deterministic five-state MDP with two actions. With discount factor $\gamma=1$, a policy that maximizes the entropy of the discounted \emph{state} visitation distribution does not visit all the transitions while a policy that maximizes the entropy of the discounted \emph{state-action} visitation distribution visits all the transitions. 
   Covering all the transitions is important for the reward-free RL framework.
   The width of the arrows indicates the visitation frequency under different policies.}
   \label{fig:five-state}
\end{figure}

\textbf{Why maximize the entropy over the state-action space?}
We demonstrate the advantage of maximizing the entropy over \emph{the state-action space} with a toy example shown in Figure \ref{fig:five-state}. 
The example contains an MDP with two actions and five states.
The first action always drives the agent back to the first state while the second action moves the agent to the next state. 
For simplicity of presentation, we consider a case with a discount factor $\gamma=1$, but other values are similar. 
The policy that maximizes the entropy of the state distribution is a deterministic policy that takes the actions shown in red. 
The dataset obtained by executing this policy is poor since the planning algorithm fails when, in the planning phase, a sparse reward is assigned to one of the state-action pairs that it visits with zero probability (e.g., a reward function $r(s,a)$ that equals to $1$ on $(s_2, a_1)$ and equals to $0$ otherwise).
In contrast, a policy that maximizes the entropy of the state-action distribution avoids the problem. 
For example, 
executing the policy that maximizes the R\'enyi entropy with $\alpha=0.5$ over the state-action space, 
the expected size of the induced dataset is only 44 such that the dataset contains all the state-action pairs (cf. Appendix \ref{app:toy}).
Note that, when the transition dynamics is known to be deterministic, a dataset containing all the state-action pairs is sufficient for the planning algorithm to obtain an optimal policy since the full transition dynamics is known.

\textbf{Why using R\'enyi entropy?}
We analyze a deterministic setting where the transition dynamics is known to be deterministic.
In this setting, the objective for the framework can be expressed as a specific objective function for the exploration phase.
This objective function is hard to optimize but motivates us to use R\'enyi entropy as the surrogate.
%

We define $n:=SA$ as the cardinality of the state-action space.
Given an exploratory policy $\pi$, we assume the dataset is collected in a way such that the transitions in the dataset can be treated as i.i.d. samples from $d_\mu^\pi$, where $d_\mu^\pi$ is the state-action distribution induced by the policy $\pi$.

In the deterministic setting, we can recover the exact transition dynamics of the environment using a dataset of transitions that contains all the $n$ state-action pairs.
Such a dataset leads to a successful planning for arbitrary reward functions, and therefore satisfies the \emph{informative} condition. 
In order to obtain such a dataset that is also \emph{compact},
we stop collecting samples if the dataset contains all the $n$ pairs. 
Given the distribution $d_\mu^\pi = (d_1, \cdots, d_n)\in\Delta^n$ from which we collect samples, the average size of the dataset is $G(d_\mu^\pi)$, where
\begin{equation}
\label{eq:coupon}
G(d_\mu^\pi) = \int_0^\infty \left(1- \prod_{i=1}^n (1-e^{-d_i t})\right) dt,
\end{equation}
which is a result of the coupon collector’s problem \cite{flajolet1992birthday}. 
Accordingly, the objective for the exploration phase can be expressed as $\min_{\pi\in \Pi} G(d_\mu^\pi)$, where $\Pi$ is the set of all the feasible policies.
(Notice that due to the limitation of the transition dynamics, the feasible state-action distributions form only a subset of $\Delta^n$.)
We show the contour of this function on $\Delta^n$ in Figure \ref{fig:contour}a.
We can see that, when any component of the distribution $d_\mu^\pi$ approaches zeros,
$G(d_\mu^\pi)$ increases rapidly forming a \emph{barrier} indicated by the black arrow.
This barrier prevents the agent to visit a state-action with a vanishing probability, and thus encourages the agent to explore the hard-to-reach state-actions.

However, this function involves an improper integral which is hard to handle, and therefore cannot be directly used as an objective function in the exploration phase.
One common choice for a tractable objective function is Shannon entropy, i.e., $\max_{\pi\in\Pi} H_1(d_\mu^\pi)$
\cite{hazan2018provably,islam2019entropy}, 
whose contour is shown in Figure \ref{fig:contour}b.
We can see that, Shannon entropy may still be large when some component of $d_\mu^\pi$ approaches zero 
(e.g., the black arrow indicates a case where the entropy is relatively large but $d_1\to 0$).
Therefore, maximizing Shannon entropy may result in a policy that visits some state-action pair with a vanishing probability.
We find that the contour of R\'enyi entropy (shown in Figure \ref{fig:contour}c/d) aligns better with $G(d_\mu^\pi)$ and alleviates this problem.
In general, the policy $\pi$ that maximizes R\'enyi entropy may correspond to a smaller $G(d_\mu^\pi)$ than that resulted from maximizing Shannon entropy
for different CMPs
(more evidence of which can be found in Appendix \ref{app:CMP}).

\begin{figure}[t]
   \centering
   \includegraphics[width=0.8\columnwidth]{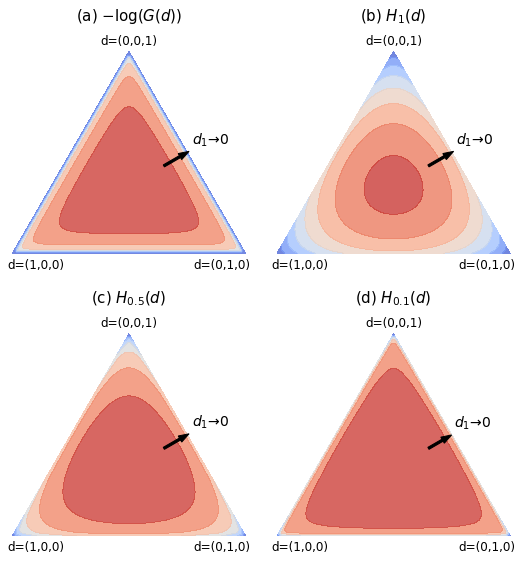}    
   \caption{
   The contours of different objective functions on $\Delta^n$ with $n=3$.
   $G(d)$ is the average size of the transition dataset sampled from the distribution $d$ to ensure a successful planning.
   Compared with Shannon entropy ($H_1$), the contour of R\'enyi entropy ($H_{0.5}$, $H_{0.1}$) aligns better with $G$ since R\'enyi entropy rapidly decreases when any component of $d$ approaches zero (indicated by the arrow).
   }
   \label{fig:contour}
\end{figure}

\textbf{Theoretical justification for the objective function.}
Next, we formally justify the use of R\'enyi entropy over the state-action space with the following theorem. For ease of analysis, we consider a standard episodic setting: 
The MDP has a finite planning horizon $H$ and stochastic dynamics $\mathbb{P}$ with the objective to maximize the cumulative reward $J(\pi; r) := \mathbb{E}_{s_1\sim \mu}[V^\pi(s_1;r)]$, where $V^\pi(s;r) := \mathbb{E}_\mathbb{P} \left[\sum_{h=1}^H r_h(s_h, a_h)|s_1=s, \pi \right]$.
We assume the reward function $r_h:\mathcal{S}\times\mathcal{A}\to[0,1], \forall h\in[H]$ is deterministic.
A (non-stationary, stochastic) policy $\pi: \mathcal{S}\times [H] \to \Delta^\mathcal{A}$ specifies the probability of choosing the actions on each state and on each step. 
The state-action distribution induced by $\pi$ on the $h$-th step is $d_{h}^\pi (s,a) := \text{Pr}(s_h=s, a_h=a|s_1\sim \mu; \pi)$.


\begin{theorem}
\label{theorem:main}
Denote $\varpi$ as a set of policies $\{ \pi^{(h)} \}_{h=1}^H$, where $\pi^{(h)}: S\times[H]\to \Delta^\mathcal{A}$ and $\pi^{(h)} \in arg \max_\pi H_\alpha(d_h^\pi)$.
Construct a dataset $\mathcal M$ with $M$ trajectories, each of which is collected by first uniformly randomly choosing a policy $\pi$ from $\varpi$ and then executing the policy $\pi$. Assume
\[
M \ge c \left( \dfrac{H^2SA}{\epsilon} \right)^{2(\beta + 1)} \frac{H}{A} \log \left( \dfrac{SAH}{p\epsilon} \right),
\]
where $\beta = \frac{\alpha}{2(1-\alpha)}$ and $c>0$ is an absolute constant.
Then, there exists a planning algorithm such that, for any reward function $r$, with probability at least $1-p$, the output policy $\hat \pi$ of the planning algorithm based on $\mathcal M$ is $3\epsilon$-optimal,  i.e., $J(\pi^*;r) - J(\hat{\pi};r) \le 3\epsilon$, where $J(\pi^*;r) = \max_\pi J(\pi;r)$.
\end{theorem}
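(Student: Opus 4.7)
The plan is to follow the standard reward-free RL template: use $\mathcal{M}$ to build an empirical model $\hat{\mathbb{P}}$, perform certainty-equivalent planning on $\hat{\mathbb{P}}$ with the supplied reward $r$, and control the sub-optimality via a simulation-lemma bound that pays for transition-estimation error only on state-actions that are meaningfully reachable. The exponent $2(\beta+1)$ in the sample complexity will track the price the R\'enyi-entropy maximizer pays for reaching the hardest corners of the state-action space.

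The crux is a pointwise visitation lower bound. Let $\rho_h(s,a):=\max_\pi d_h^\pi(s,a)$. Since $H_\alpha$ is concave on the simplex for $\alpha\in(0,1)$ and the set of occupancy measures $\{d_h^\pi\}$ is a convex polytope, the maximizer $d^{*}:=d_h^{\pi^{(h)}}$ satisfies the first-order condition $\langle \nabla H_\alpha(d^{*}),\,d_h^{\pi'}-d^{*}\rangle\le 0$ for all feasible $\pi'$. Computing $\nabla_i H_\alpha(d)\propto d_i^{\alpha-1}$ and plugging in a witness policy $\pi'$ with $d_h^{\pi'}(s,a)=\rho_h(s,a)$ yields
\[
(d^{*}(s,a))^{\alpha-1}\,\rho_h(s,a)\ \le\ \sum_{s',a'}(d^{*}(s',a'))^{\alpha}\ \le\ (SA)^{1-\alpha},
\]
where the last step is Jensen's inequality for $x\mapsto x^\alpha$. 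Rearranging gives
\[
d_h^{\pi^{(h)}}(s,a)\ \ge\ \frac{\rho_h(s,a)^{1/(1-\alpha)}}{SA}\ =\ \frac{\rho_h(s,a)^{2\beta+1}}{SA},
\]
using the identity $1/(1-\alpha)=2\beta+1$ implied by $\beta=\alpha/(2(1-\alpha))$. Since each trajectory in $\mathcal{M}$ independently picks $\pi^{(h)}$ with probability $1/H$, the per-trajectory probability of visiting $(s,a,h)$ is at least $\rho_h(s,a)^{2\beta+1}/(H\,SA)$.

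Next I would split $(s,a,h)$ by a threshold $\epsilon_1\asymp\epsilon/(H^{2}SA)$ into \emph{significant} ($\rho_h\ge\epsilon_1$) and \emph{insignificant} triples. For significant triples a Chernoff plus union bound gives $N_h(s,a)\ge M\,\epsilon_1^{2\beta+1}/(2H\,SA)$ with probability at least $1-p/2$, after which $\ell_1$ concentration of the empirical transition yields $\|\hat{\mathbb{P}}(\cdot|s,a,h)-\mathbb{P}(\cdot|s,a,h)\|_1\le \tilde{O}(\sqrt{S/N_h(s,a)})$ uniformly. Let the planning algorithm return $\hat\pi\in\arg\max_\pi \hat J(\pi;r)$. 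The simulation lemma gives, for every $\pi$,
\[
|J(\pi;r)-\hat J(\pi;r)|\ \le\ H\sum_{h,s,a} d_h^\pi(s,a)\,\|(\hat{\mathbb{P}}-\mathbb{P})(\cdot|s,a,h)\|_1.
\]
Splitting the right-hand side, the insignificant part is bounded by $2H^{2}\,SA\,\epsilon_1\le\epsilon$ via $d_h^\pi(s,a)\le\rho_h(s,a)\le\epsilon_1$; the significant part is bounded by plugging in the concentration estimate together with the visitation lower bound, applying $d_h^\pi\le\rho_h$, and summing, which contributes a further $\epsilon$ exactly when $M$ meets the stated lower bound. Uniformity in $\pi$ then upgrades $\hat J(\hat\pi;r)\ge \hat J(\pi^{*};r)$ to $J(\pi^{*};r)-J(\hat\pi;r)\le 3\epsilon$ via the usual three-term decomposition $J(\pi^{*})\le \hat J(\pi^{*})+\epsilon\le \hat J(\hat\pi)+\epsilon \le J(\hat\pi)+2\epsilon$.

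The main obstacle is the first step: one needs a \emph{pointwise} lower bound on $d^{*}(s,a)$, not a bound on the scalar value $H_\alpha(d^{*})$, because Shannon-style entropies can be nearly maximal while an individual coordinate vanishes (as illustrated in Figure~\ref{fig:contour}). One must therefore read the pointwise information off the first-order condition itself, and this is precisely why R\'enyi entropy with $\alpha<1$, rather than Shannon entropy, is the natural objective. A secondary subtlety is that the simulation-lemma error must hold uniformly across \emph{all} rewards and policies from a single dataset; this pins down the threshold $\epsilon_1\asymp\epsilon/(H^{2}SA)$, and combined with the visitation exponent $2\beta+1$ it produces the final sample-complexity exponent $2(\beta+1)$.
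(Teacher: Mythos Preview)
Your skeleton is essentially the paper's: build $\hat{\mathbb P}$ from $\mathcal M$, plan on $\hat{\mathbb P}$, and control $|J-\hat J|$ via a simulation lemma with a $\delta$-significance split. Your key ingredient---the pointwise visitation bound obtained from the first-order optimality condition of the concave $H_\alpha$ over the convex occupancy polytope---is exactly the paper's central lemma, which they phrase in ratio form as $\rho_h(s,a)/\mu_h(s,a)\le SAH/\delta^{\alpha/(1-\alpha)}$ for $\delta$-significant $(s,a,h)$; your inequality $d_h^{\pi^{(h)}}(s,a)\ge \rho_h(s,a)^{2\beta+1}/(SA)$ is the same statement after substituting $\delta=\rho_h(s,a)$ and using $\mu_h\ge d_h^{\pi^{(h)}}/H$.

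Where you diverge is the concentration step, and this does matter for the precise bound. You propose per-triple $\ell_1$ concentration $\|\hat{\mathbb P}-\mathbb P\|_1\lesssim\sqrt{S/N_h(s,a)}$ and then sum $H\sum_{h,s,a}d_h^\pi\|\cdot\|_1$. Carrying this through with $N_h\gtrsim M\epsilon_1^{2\beta+1}/(HSA)$ and $\epsilon_1\asymp\epsilon/(H^2SA)$ yields $M\gtrsim (H^2SA/\epsilon)^{2(\beta+1)}\cdot H^3S/\epsilon$, i.e.\ a factor $H^2SA/\epsilon$ worse than the theorem's $H/A$ prefactor; in particular the $\epsilon$-exponent is $2\beta+3$, not $2\beta+2$. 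The paper instead applies Cauchy--Schwarz to pass from $d_h^\pi$-expectation to $\mu_h$-expectation via the ratio bound, and then invokes an \emph{averaged} concentration inequality $\mathbb E_{(s,a)\sim\mu_h}|(\hat{\mathbb P}_h-\mathbb P_h)G(s,a)|^2\le \tilde O(H^2S/M)$ (adapted from Jin et al.), which is where the tighter rate comes from. To recover the stated sample complexity you need that change-of-measure plus $L^2$-under-$\mu_h$ route rather than the pointwise $\ell_1$ bound.

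A minor point: the paper's planner is NPG on $(\hat{\mathbb P},r)$, whose optimization error supplies the third $\epsilon$ in the $3\epsilon$ conclusion; your exact $\hat\pi\in\arg\max_\pi\hat J(\pi;r)$ actually gives $2\epsilon$, so your final line overcounts by one $\epsilon$ (which is harmless but explains the mismatch).
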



We provide the proof in Appendix \ref{app:proof}.
The theorem justifies that R\'enyi entropy with small $\alpha$ is a proper objective function for the exploration phase since the number of samples needed to ensure a successful planning is bounded when $\alpha$ is small.
When $\alpha\to 1$, the bound becomes infinity. 
The algorithm proposed by \citet{jin2020reward} requires to sample $\tilde{O}(H^5S^2A/\epsilon^2)$ trajectories where $\tilde{O}$ hides a logarithmic factor, which matches our bound when $\alpha \to 0$.
However, they construct a policy for each state on each step, whereas we only need $H$ policies which easily adapts for the non-tabular case.

\section{Method}
\label{sec:method}

In this section, we develop an algorithm for the non-tabular case.
In the exploration phase, we update the policy $\pi$ to maximize $H_\alpha(d_\mu^\pi)$.
We first deduce a gradient ascent update rule which is similar to vanilla policy gradient with the reward replaced by a function of the state-action distribution of the current policy.
Afterwards, we estimate the state-action distribution using VAE.
We also estimate a value function and update the policy using PPO, which is more sample efficient and robust than vanilla policy gradient.
Then, we obtain a dataset by collecting samples from the learned policy.
In the planning phase, we use a popular batch RL algorithm, BCQ,
to plan for a good policy when the reward function is specified.
One may also use other batch RL algorithms.
We show the pseudocode of the process in Algorithm \ref{algorithm}, the details of which are described in the following paragraphs.

\begin{figure*}[htbp]
\begin{subfigure}{.26\textwidth}
  \centering
  \includegraphics[width=\linewidth]{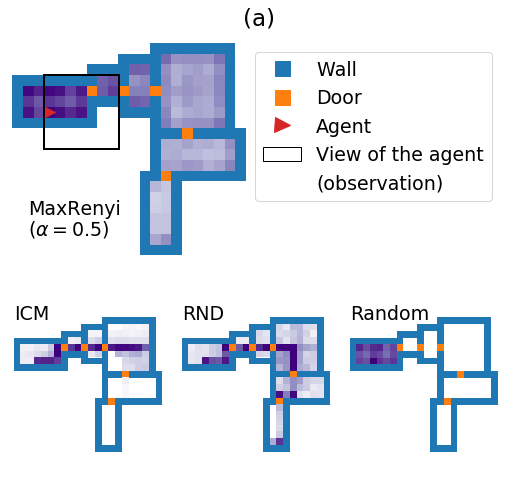}
\end{subfigure}
\begin{subfigure}{.34\textwidth}
  \centering
  \includegraphics[width=\linewidth]{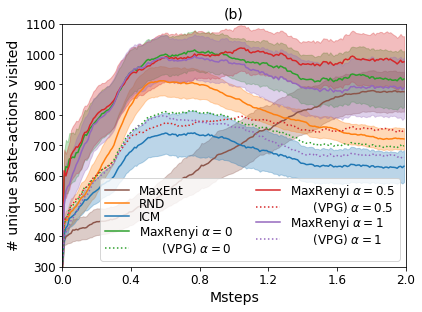}  
\end{subfigure}
\begin{subfigure}{.375\textwidth}
  \centering
  \includegraphics[width=\linewidth]{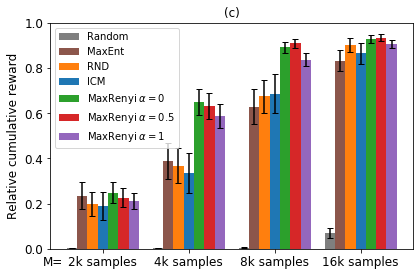}
\end{subfigure}
\caption{Experiments on \texttt{MultiRooms}. 
a) Illustration of the \texttt{MultiRooms} environment and the state visitation distribution of the policies learned by different algorithms. 
b) The number of unique state-action pairs visited by running the policy for 2k steps in each iteration in the exploration phase. 
c) The normalized cumulative reward of the policy computed in the planning phase (i.e., normalized by the corresponding optimal cumulative reward), averaged over different reward functions.
The lines indicate the average across five independent runs and the shaded areas indicate the standard deviation.
}
\label{fig:multiroom}
\end{figure*}



\textbf{Policy gradient formulation.} 
Let us first consider the gradient of the objective function $H_\alpha(d_\mu^{\pi})$, where the policy $\pi$ is approximated by a policy network with the parameters denoted as $\theta$.
We omit the dependency of $\pi$ on $\theta$.
The gradient of the objective function is
\begin{equation}
\label{eq:PG_alpha}
\begin{aligned}
    \nabla_\theta H_\alpha(d_\mu^\pi) & \propto \dfrac{\alpha}{1-\alpha} \mathbb{E}_{(s,a)\sim d_\mu^\pi} \Big[ \nabla_\theta \log \pi(a|s) \\
    &
    \left( \tfrac{1}{1-\gamma} \langle d_{s,a}^\pi, (d_\mu^\pi)^{\alpha-1} \rangle + (d_\mu^\pi (s,a))^{\alpha-1} \right) \Big].
\end{aligned}
\end{equation}
As a special case, when $\alpha\to 1$, the R\'enyi entropy becomes the Shannon entropy and the gradient turns into
\begin{equation}
\label{eq:PG_Shannon}
\begin{aligned}
    \nabla_\theta H_1(d_\mu^\pi) & = \mathbb{E}_{(s,a)\sim d_\mu^\pi} \Big[ \nabla_\theta \log \pi(a|s) \\
    & \left( \tfrac{1}{1-\gamma} \langle d_{s,a}^\pi, -\log d_\mu^\pi \rangle - \log d_\mu^\pi(s,a)
    \right) \Big].
\end{aligned}
\end{equation}
Due to space limit, the derivation can be found in Appendix \ref{app:PG}.
There are two terms in the gradients.
The first term equals to $\mathbb{E}_{(s,a)\sim d_\mu^\pi} \left[ \nabla_\theta \log \pi(a|s) Q^\pi(s,a;r) \right]$ with the reward $r(s,a) = (d_\mu^\pi(s,a))^{\alpha-1}$ or $r(s,a) = -\log d_\mu^\pi(s,a)$, which resembles the policy gradient (of cumulative reward) for a standard MDP.
This term encourages the policy to choose the actions that lead to the rarely visited state-action pairs.
In a similar way, the second term resembles the policy gradient of instant reward $\mathbb{E}_{(s,a)\sim d_\mu^\pi} \left[ \nabla_\theta \log \pi(a|s) r(s,a) \right]$ and encourages the agent to choose the actions that are rarely selected on the current step.
The second term in \eqref{eq:PG_Shannon} equals to 
$ \nabla_\theta \mathbb{E}_{s\sim d_\mu^\pi}[H_1(\pi(\cdot|s))]$.
For stability, we also replace the second term in \eqref{eq:PG_alpha} with $ \nabla_\theta \mathbb{E}_{s\sim d_\mu^\pi}[H_1(\pi(\cdot|s))]$
\footnote{
We found that using this term leads to more stable performance empirically since this term does not suffer from the high variance induced by the estimation of $d_\mu^\pi$ (see also Appendix \ref{app:explain}).
}.
Accordingly, we update the policy based on the following formula 
where $\eta>0$ is a hyperparameter:
\begin{equation}
\begin{aligned}
     \nabla_\theta H_\alpha(d_\mu^\pi) &
     \approx 
     \begin{cases}
     \nabla_\theta J(\theta;r=(d_\mu^\pi)^{\alpha-1})
     & 0<\alpha<1 \\
     \nabla_\theta J(\theta;r=-\log d_\mu^\pi) 
     & \alpha=1 \\
    \end{cases} \\
    & \qquad \qquad \quad +
     \eta 
     \nabla_\theta
     \mathbb{E}_{s\sim d_\mu^\pi}[H_1(\pi(\cdot|s))].
\end{aligned}
\end{equation}

\textbf{Discussion.}
\citet{islam2019entropy} motivate the agent to explore by maximizing the Shannon entropy over the state space resulting in an intrinsic reward $r(s)=-\log d_\mu^\pi(s)$ which is similar to ours when $\alpha\to 1$.
\citet{bellemare2016unifying} use an intrinsic reward $r(s,a)=\hat{N}(s,a)^{-1/2}$ where $\hat{N}(s,a)$ is an estimation of the visit count of $(s,a)$. Our algorithm with $\alpha = 1/2$ induces a similar reward.


\begin{algorithm}[t]
\caption{Maximizing the state-action space R\'enyi entropy for the reward-free RL framework}
\label{algorithm}
\begin{algorithmic}[1]
\State {\bf Input:} The number of iterations in the exploration phase $T$;
The size of the dataset $M$;
The parameter of R\'enyi entropy $\alpha$.
\State {\bf Initialize:}
Replay buffer that stores the samples form the last $n$ iterations $\mathcal{D}=\emptyset$; 
Density model $P_\phi$ (VAE); 
Value function $V_\psi$;
Policy network $\pi_\theta$.
\State{$\triangleright$ \emph{Exploration phase} (MaxRenyi)}
\For {$i=1$ to $T$}
    \State Sample $\mathcal{D}_i$ using $\pi_\theta$ and add it to $\mathcal{D}$
    \State Update $P_\phi$ to 
    estimate the state-action distribution based on $\mathcal{D}_i$
    \State Update $V_\psi$ to minimize $\mathcal{L}_\psi(\mathcal{D})$ defined in \eqref{eq:value_function_loss} 
    \State Update $\pi_\theta$ to minimize $\mathcal{L}_\theta(\mathcal{D}_i)$ defined in \eqref{eq:policy_network_loss} 
\EndFor
\State{$\triangleright$ \emph{Collect the dataset}}
\State Rollout the exploratory policy $\pi_\theta$, collect $M$ trajectories and store them in $\mathcal{M}$
\State{$\triangleright$ \emph{Planning phase}}
\State Reveal a reward function $r: \mathcal{S}\times \mathcal{A} \to \mathbb{R}$
\State Construct a labeled dataset $\overline{\mathcal{M}}:=\{(s,a,r(s,a))\}$
\State Plan for a policy $\pi_r = \text{BCQ}(\overline{\mathcal{M}})$
\end{algorithmic}
\end{algorithm}

\textbf{Sample collection.}
To estimate $d_\mu^\pi$ for calculating the underlying reward, we collect samples in the following way (cf. Line 5 in Algorithm \ref{algorithm}):
In the $i$-th iteration, we sample $m$ trajectories. 
In each trajectory, we terminate the rollout on each step with probability $1-\gamma$.
In this way, we obtain a set of trajectories $\mathcal{D}_i=\{(s_{j1},a_{j1}), \cdots, (s_{jN_j}, a_{jN_j})\}_{j=1}^m$ where $N_j$ is the length of the $j$-th trajectory.
Then, we can use VAE to estimate $d_\mu^\pi$ based on $\mathcal{D}_i$, i.e., using ELBO as the density estimation
(cf. Line 6 in Algorithm \ref{algorithm}).


\textbf{Value function.} Instead of performing vanilla policy gradient, we update the policy using PPO which is more robust and sample efficient. 
However, the underlying reward function changes across iterations. 
This makes it hard to learn a value function incrementally that is used to reduce variance in PPO.
We propose to train a value function network using relabeled off-policy data. 
In the $i$-th iteration, we maintain a replay buffer $\mathcal{D} = \mathcal{D}_i \cup \mathcal{D}_{i-1} \cup \cdots \cup \mathcal{D}_{i-n+1}$ that stores the trajectories of the last $n$ iterations (cf. Line 5 in Algorithm \ref{algorithm}).
Next, we calculate the reward for each state-action pair in $\mathcal{D}$ with the latest density estimator $P_\phi$, 
i.e.,
we assign $r = (P_\phi(s,a))^{\alpha-1}$ or $r = -\log P_\phi(s,a)$ for each $(s,a) \in \mathcal{D}$.
Based on these rewards, we can estimate the target value $V^\text{targ}(s)$ for each state $s\in\mathcal{D}$ using the truncated TD($\lambda$) estimator \cite{sutton2018introduction} which balances bias and variance (see the detail in Appendix \ref{app:value}).
Then, we train the value function network $V_\psi$ (where $\psi$ is the parameter) to minimize the mean squared error w.r.t. the target values:
\begin{equation}
\label{eq:value_function_loss}
    \mathcal{L}_\psi(\mathcal{D}) = \mathbb{E}_{s\sim\mathcal{D}}\left[(V_\psi(s) - V^\text{targ}(s))^2 \right].
\end{equation}

\textbf{Policy network.}
In each iteration, we update the policy network to maximize the following objective function that is used in PPO:
\begin{equation}
\label{eq:policy_network_loss}
\begin{aligned}
    -\mathcal{L}_\theta(\mathcal{D}_i) = & \mathbb{E}_{s,a\sim \mathcal{D}_i} \left[ 
    \min \left(
    \tfrac{\pi_\theta(a|s)}{\pi_\text{old}(a|s)} \hat{A}(s,a), 
    g(\varepsilon, \hat{A}(s,a))
    \right)
    \right] \\
    & + \eta \mathbb{E}_{s\sim \mathcal{D}_i}[H_1(\pi_\theta(\cdot|s))],
\end{aligned}
\end{equation}
where $
g(\varepsilon, A) = 
    \begin{cases}
    (1+\varepsilon) A & A \ge 0 \\
    (1-\varepsilon) A & A < 0
    \end{cases}
$ and $\hat{A}(s,a)$ is the advantage estimated using GAE \cite{schulman2015high} and the learned value function $V_\phi$.

\begin{figure*}[hbtp]
\begin{subfigure}{.3\textwidth}
  \centering
  \includegraphics[width=\linewidth]{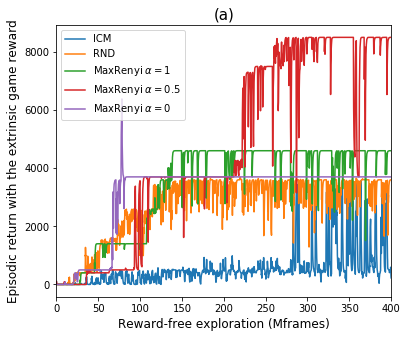}
\end{subfigure}
\begin{subfigure}{.68\textwidth}
  \centering
  \includegraphics[width=\linewidth]{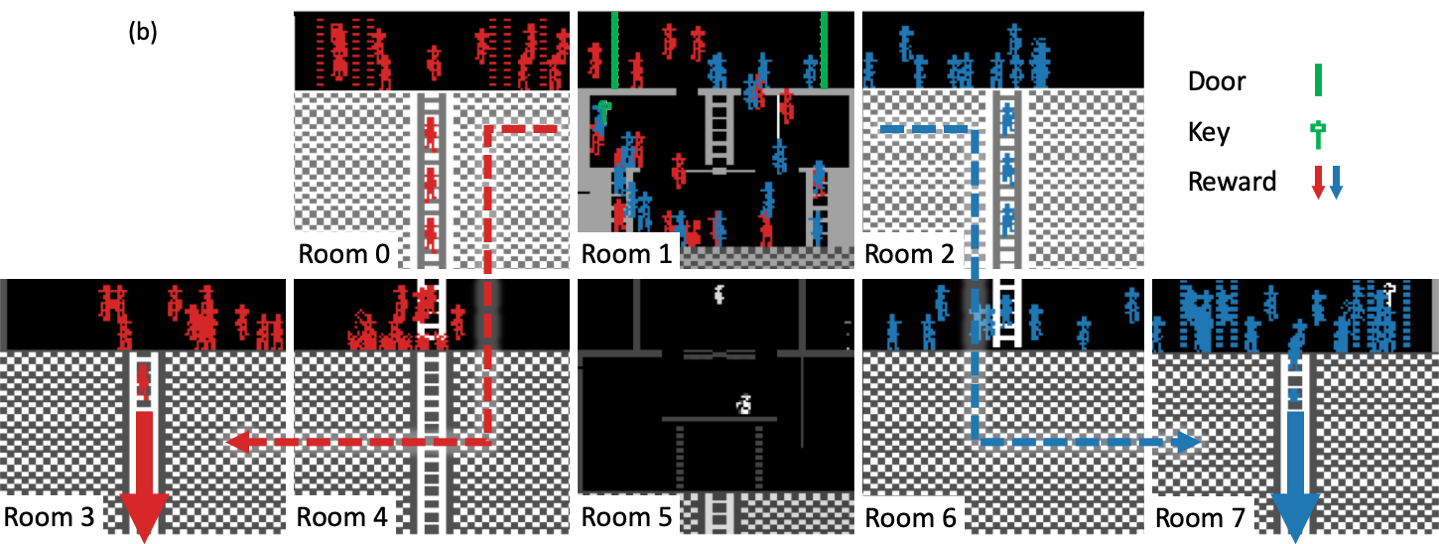}
\end{subfigure}
\caption{Experiments on Montezuma's Revenge. a) The episodic return with the extrinsic game reward along with the training in the exploration phase. b) The trajectory of the agent by executing the policy trained with the corresponding reward function in the planning phase. Best viewed in color.}
\label{fig:montezuma}
\end{figure*}

\section{Experiments}
\label{sec:experiment}

We first conduct experiments on the MultiRooms environment from minigrid \cite{gym_minigrid} to compare the performance of MaxRenyi with several baseline exploration algorithms in the exploration phase and the downstream planning phase.
We also compare the performance of MaxRenyi with different $\alpha$s, and show the advantage of using a PPO based algorithm and maximizing the entropy over the state-action space by comparing with ablated versions.
Then, we conduct experiments on a set of Atari (with image-based observations) \cite{machado18arcade} and Mujoco \cite{todorov2012mujoco} tasks to show that our algorithm outperforms the baselines
in complex environments for the reward-free RL framework.
We also provide detailed results to investigate why our algorithm succeeds.
More experiments and the detailed experiment settings/hyperparameters can be found in Appendix \ref{app:exp}.

\textbf{Experiments on MultiRooms.}
The observation in MultiRooms is the first person perspective from the agent which is high-dimensional and partially observable, and the actions are turning left/right, moving forward and opening the door (cf. Figure \ref{fig:multiroom}a above).
This environment is hard to explore for standard RL algorithms due to sparse reward.
In the exploration phase, the agent has to navigate and open the doors to explore through a series of rooms. 
In the planning phase, we randomly assign a goal state-action pair, reward the agent upon this state-action, and then train a policy with this reward function.
We compare our exploration algorithm MaxRenyi with 
ICM \cite{pathak2017curiosity},
RND \cite{burda2018exploration} (which use different prediction errors as the intrinsic reward), 
MaxEnt \cite{hazan2018provably} (which maximizes the state space entropy) and an ablated version of our algorithm MaxRenyi(VPG) (that updates the policy directly by vanilla policy gradient using \eqref{eq:PG_alpha} and \eqref{eq:PG_Shannon}). 

For the exploration phase, we show the performance of different algorithms in Figure \ref{fig:multiroom}b.
First, we see that variants of MaxRenyi performs better than the baseline algorithms.
Specifically, MaxRenyi is more stable than ICM and RND that
explores well at the start but later degenerates when the agent becomes familiar with all the states.
Second, we observe that 
MaxRenyi 
performs better than 
MaxRenyi(VPG) with different $\alpha$s,
indicating that MaxRenyi benefits from
adopting PPO which reduces variance with a value function and update the policy conservatively.
Third, MaxRenyi with $\alpha=0.5$ performs better than $\alpha=1$ which is consistent with our theory.
However, we observe that MaxRenyi with $\alpha=0$ is more likely to be unstable empirically and results in slightly worse performance.
At last, we show the visitation frequency of the exploratory policies resulted from different algorithms in Figure \ref{fig:multiroom}a.
We see that MaxRenyi visits the states more uniformly compared with the baseline algorithms, especially the hard-to-reach states such as the corners of the rooms.

For the planning phase, we collect datasets of different sizes by executing different exploratory policies, and use the datasets to compute policies with different reward functions using BCQ.
We show the performance of the resultant policies in Figure \ref{fig:multiroom}c.
First, we observe that the datasets generated by running random policies do not lead to a successful planning, indicating the importance of learning a good exploratory policy in this framework. 
Second, 
the dataset with only 8k samples leads to a successful planning (with a normalized cumulative reward larger than 0.8) using MaxRenyi,
whereas a dataset with 16k samples is needed to succeed in the planning phase when using ICM, RND or MaxEnt.
This illustrates that MaxRenyi leads to a better performance in the planning phase (i.e., attains good policies with fewer samples) than the previous exploration algorithms.

Further, we compare MaxRenyi that maximizes the entropy over the state-action space with two ablated versions: ``State + bonus''
that maximizes the entropy over the state space with an entropy bonus (i.e., using $P_\phi$ to estimate the state distribution in Algorithm \ref{algorithm}), and
``State'' that maximizes the entropy over the state space (i.e., additionally setting $\eta=0$ in \eqref{eq:policy_network_loss}).
Notice that the reward-free RL framework requires that a good policy is obtained for arbitrary reward functions in the planning phase.
Therefore, we show the lowest normalized cumulative reward of the planned policies across different reward functions for the algorithms in Table \ref{table:2}.
We see that maximizing the entropy over the state-action space results in better performance for this framework. 

\begin{table}[h]
\centering
\begin{tabular}{lcccc} 
\multicolumn{1}{r}{$M=$} & 2k & 4k & 8k & 16k \\
\hline
MaxRenyi & \textbf{0.14} & \textbf{0.52} & \textbf{0.87} & \textbf{0.91} \\
State + bonus & \textbf{0.14} & 0.48 & 0.78 & 0.87 \\
State & 0.07 & 0.15 & 0.41 & 0.57 \\
\end{tabular}
\caption{The lowest normalized cumulative reward of the planned policies across different reward functions, averaged over five runs. We use $\alpha=0.5$ in the algorithms.}
\label{table:2}
\end{table}

\textbf{Experiments on Atari and Mujoco.}
In the exploration phase, we run different exploration algorithms in the reward-free environment of Atari (Mujoco) for 200M (10M) steps and collect a dataset with 100M (5M) samples by executing the learned policy.
In the planning phase, a policy is computed offline to maximize the performance under the extrinsic game reward using the batch RL algorithm based on the dataset.
We show the performance of the planned policies in Table \ref{table:1}.
We can see that MaxRenyi performs well on a range of tasks with high-dimensional observations or continuous state-actions for the reward-free RL framework.

\begin{table}[h]
\centering
\begin{tabular}{rrrrr} 
& MaxRenyi & RND & ICM & Random \\
\hline
Montezuma & \textbf{8,100} & 4,740 & 380 & 0 \\
Venture & \textbf{900} & 840 & 240 & 0 \\
Gravitar & 1,850 & \textbf{1,890} & 940 & 340 \\
PrivateEye & \textbf{6,820} & 3,040 & 100 & 100 \\
Solaris & \textbf{2,192} & 1,416 & 1,844 & 1,508 \\
Hopper & \textbf{973} & 819 & 704 & 78 \\
HalfCheetah & \textbf{1,466} & 1,083 & 700 & 54 \\
\end{tabular}
\caption{The performance of the planned policies under the reward-free RL framework corresponding to different exploratory algorithms, averaged over five runs. Random indicates using a random policy as the exploratory policy. We use $\alpha=0.5$ in MaxRenyi.}
\label{table:1}
\end{table}

We also provide detailed results for Montezuma's Revenge and Hopper.
For Montezuma's Revenge, we show the result for the exploration phase in Figure \ref{fig:montezuma}a.
We observe that although MaxRenyi 
learns an exploratory policy in a reward-free environment,
it achieves reasonable performance under the extrinsic game reward and performs better than RND and ICM.
We also provide the number of visited rooms along the training in Appendix \ref{app:exp}, which demonstrates that our algorithm performs better in terms of the state space coverage as well.
In the planning phase, we design two different sparse reward functions that only reward the agent if it goes through Room 3 or Room 7 (to the next room).
We show the trajectories of the policies planned with the two reward functions in red and blue respectively in Figure \ref{fig:montezuma}b.
We see that although the reward functions are sparse, the agent chooses the correct path (e.g., opening the correct door in Room 1 with the only key) and successfully reaches the specified room.
This indicates that our algorithm generates good policies for different reward functions based on an offline planning in complex environments.
For Hopper, we show the t-SNE \cite{maaten2008visualizing} plot of the state-actions randomly sampled from the trajectories generated by different policies in Figure \ref{fig:hopper}.
We can see that MaxRenyi generates state-actions that are distributed more uniformly than RND and overlap those from the expert policy.
This indicates that
the good performance of our algorithm in the planning phase is resulted from the better coverage of our exploratory policy.

\begin{figure}[thbp]
  \centering
  \includegraphics[trim={0 0 0 30},clip,width=0.7\columnwidth]{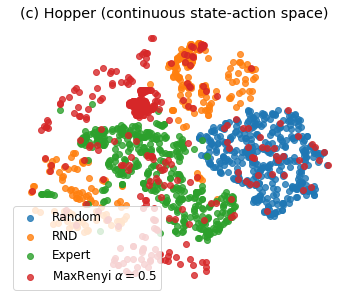}
\caption{t-SNE plot of the state-actions randomly sampled from the trajectories generated by different policies in Hopper. 
Random policy chooses actions randomly. 
Expert policy is trained using PPO for 2M steps with extrinsic reward. 
RND and MaxRenyi policy are trained using corresponding algorithms without extrinsic reward. }
\label{fig:hopper}
\end{figure}

\vspace{-10pt}
\section{Conclusion}

In this paper, we consider a reward-free RL framework, 
which is useful when there are multiple reward functions of interest or when we design reward functions offline.
In this framework, an exploratory policy is learned by interacting with a reward-free environment in the exploration phase and generates a dataset.
In the planning phase, when the reward function is specified, a policy is computed offline to maximize the corresponding cumulative reward using the batch RL algorithm based on the dataset.
We propose a novel objective function, the R\'enyi entropy over the state-action space, for the exploration phase.
We theoretically justify this objective and design a practical algorithm to optimize for this objective.
In the experiments, we show that our exploration algorithm is effective under this framework, while being more sample efficient and more robust than the previous exploration algorithms.









\section*{Acknowledgement}
Chuheng Zhang and Jian Li are supported in part by the National Natural Science Foundation of China Grant 61822203, 61772297, 61632016, 61761146003,
and the Zhongguancun Haihua Institute for Frontier Information Technology, Turing AI Institute of Nanjing and Xi'an Institute for Interdisciplinary Information Core Technology.
Yuanying Cai and Longbo Huang are supported in part by the National Natural Science Foundation of China Grant 61672316, and the Zhongguancun Haihua Institute for Frontier Information Technology, China.

\bibliography{reference}

\clearpage

\begin{alphasection}
\onecolumn

\section{The toy example in Figure \ref{fig:five-state}}
\label{app:toy}

In this example, the initial state is fixed to be $s_1$.
Due to the simplicity of this toy example, we can solve for the policy that maximizes the R\'enyi entropy of the state-action distribution with $\gamma \to 1$ and $\alpha = 0.5$. 
The optimal policy is
\[
\pi = 
\begin{bmatrix}
0.321 & 0.276 & 0.294 & 0.401 & 0.381 \\
0.679 & 0.724 & 0.706 & 0.599 & 0.619 
\end{bmatrix},
\]
where $\pi_{i,j}$ represents the probability of choosing $a_i$ on $s_j$.
The corresponding state-action distribution is 
\[
d_\mu^\pi = 
\begin{bmatrix}
0.107 & 0.062 & 0.047 & 0.045 & 0.065 \\ 0.226 & 0.162 & 0.113 & 0.067 & 0.106
\end{bmatrix},
\]
where the $(i,j)$-th element represents the probability density of the state-action distribution on $(s_j, a_i)$.
Using equation \eqref{eq:coupon}, we obtain $G(d_\mu^\pi) = 43.14$ which is the expected number of samples collected form this distribution that contains all the state-action pairs.

\section{An example to illustrate the advantage of using R\'enyi entropy}
\label{app:CMP}


Although we did not justify theoretically that R\'enyi entropy serves as a better surrogate function of $G$ than Shannon entropy in the main text, here we illustrate that this may hold in general.
Specifically, we try to show that the corresponding $G$ of the policy that maximizes R\'enyi entropy with small $\alpha$ is always smaller than that of the policy that maximizes Shannon entropy.
Notice that when the dynamics is known to be deterministic, $G$ indicates how many samples are needed in expectation to ensure a successful planning.
In the stochastic setting, $G$ also serves as a good indicator since this is the lower bound of the number of samples needed for a successful planning.

To show that R\'enyi entropy is a good surrogate function, we can use brute-force search to check if there exists a CMP on which the Shannon entropy is better than R\'enyi entropy in terms of $G$.
As an illustration, we only consider  CMPs with two states and two actions. Specifically, for this family of CMPs, there are 5 free parameters to be searched: 4 parameters for the transition dynamics and 1 parameter for the initial state distribution. 
We set step size for searching to 0.1 for each parameter and use $\alpha=0.1$, $\gamma=0.9$.
The result shows there does not exists any CMP on which the Shannon entropy is better than R\'enyi entropy in term of $G$ out of the searched $10^5$ CMPs.




To further investigate,  we consider a specific CMP. The transition matrix $P\in\mathbb{R}^{SA\times S}$ of this CMP is
\[
P = 
\begin{bmatrix}
1.0 & 0.0 \\
0.9 & 0.1 \\
1.0 & 0.0 \\
0.4 & 0.6 
\end{bmatrix},
\] 
where the rows corresponds to $(s_1, a_1), (s_1, a_2), (s_2, a_1), (s_2, a_2)$ respectively and the columns corresponds the next states ($s_1$ and $s_2$ respectively).
The initial state distribution of this CMP is 
\[
\mu = 
\begin{bmatrix}
1.0 & 0.0
\end{bmatrix}.
\]
For this CMP, the $G$ value of the policy that maximizes Shannon entropy is 88, while the values are only 39 and 47 for policies that maximize R\'enyi entropy with $\alpha = 0.1$, $\alpha=0.5$ respectively. 
The optimal $G$ value is 32. 
This indicates that we may need 2x more samples to accurately estimate the dynamics of the CMP.
We observe $s_2$ is much harder to reach than $s_1$ on this CMP: 
The initial state is always $s_1$ and taking action $a_1$ will always reach $s_1$. 
In order to visit $s_2$ more, policies should take $a_2$ with a high probability on $s_1$. 
In fact, the policy that maximizes Shannon entropy takes $a_2$ on $s_1$ with the probability of only 0.58, while the policies that maximize R\'enyi entropy with $\alpha = 0.1$ and $\alpha=0.5$ take $a_2$ on $s_1$ with probabilities 0.71 and 0.64 respectively.  

This example illustrates the following essence that widely exists in other CMPs: 
A good objective function should encourage the policy to visit the hard-to-reach state-actions as frequently as possible and R\'enyi entropy does a better job than Shannon entropy from this perspective. 

\section{Proof of Theorem \ref{theorem:main}}
\label{app:proof}

We provide Theorem \ref{theorem:main} to justify our objective that maximizes the entropy over the state-action space in the exploration phase for the reward-free RL framework.
In the following analysis, we consider the standard episodic and tabular case as follows:
The MDP is defined as 
$\langle \mathcal{S}, \mathcal{A}, H, \mathbb{P}, r, \mu\rangle$, 
where $\mathcal{S}$ is the state space with $S=|\mathcal{S}|$, 
$\mathcal{A}$ is the action space with $A=|\mathcal{A}|$, 
$H$ is the length of each episode, 
$\mathbb{P}=\{\mathbb{P}_h: h\in [H]\}$ is the set of unknown transition matrices, 
$r=\{r_h: h\in[H]\}$ is the set of the reward functions and 
$\mu$ is the unknown initial state distribution. 
We denote $\mathbb{P}_h(s'|s,a)$ as the transition probability from $(s,a)$ to $s'$ on the $h$-th step.
We assume that the instant reward for taking the action $a$ on the state $s$ on the $h$-th step $r_h(s,a)\in [0,1]$ is deterministic.
A policy $\pi:\mathcal{S}\times[H] \to \Delta^\mathcal{A}$ specifies the probability of choosing the actions on each state and on each step.
We denote the set of all such policies as $\Pi$.

The agent with policy $\pi$ interacts with the MDP as follows: 
At the start of each episode, an initial state $s_1$ is sampled from the distribution $\mu$. 
Next, at each step $h\in[H]$, the agent first observes the state $s_h\in\mathcal{S}$ and then selects an action $a_h$ from the distribution $\pi(s_h, h)$.
The environment returns the reward $r_h(s_h, a_h) \in [0,1]$ and transits to the next state $s_{h+1}\in\mathcal{S}$ according to the probability $\mathbb{P}_h(\cdot|s_h, a_h)$.
The state-action distribution induced by $\pi$ on the $h$-th step is $d_{h}^\pi (s,a) := \text{Pr}(s_h=s, a_h=a|s_1\sim \mu; \pi)$.

The state value function for a policy $\pi$ is defined as $V_h^\pi(s;r) := \mathbb{E} \left[\sum_{t=h}^H r_t(s_t, a_t)|s_h=s, \pi \right]$.
The objective is to find a policy $\pi$ that maximizes the cumulative reward $J(\pi; r) := \mathbb{E}_{s_1\sim \mu}[V_1^\pi(s_1;r)]$.

The proof goes through in a similar way to that of \citet{jin2020reward}.

\begin{algorithm}[ht]
\caption{The planning algorithm}
\label{algorithm:planning}
\hspace*{0.02in} {\bf Input:} 
A dataset of transitions $\mathcal M$; The reward function $r$; The level of accuracy $\epsilon$.
\begin{algorithmic}[1]
\For{all $(s,a,s',h)\in \mathcal{S} \times \mathcal{A}\times\mathcal{S}\times [H]$}
        \State $N_h(s,a,s')\leftarrow \sum_{(s_h,a_h,s_{h+1})\in \mathcal M} \mathbb{I}[s_h=s,a_h=a,s_{h+1}=s']$
        \State  $N_h(s,a)\leftarrow \sum_{s'}N_h(s,a,s')$
        \State $\hat{\mathbb{P}}_h(s'|s,a)=N_h(s,a,s')/N_h(s,a)$
\EndFor
\State $\hat \pi \leftarrow \text{APPROXIMATE-MDP-SOLVER}(\mathbb{\hat P},r,\epsilon)$
\State \Return $\hat \pi$
\end{algorithmic}
\end{algorithm}

\begin{proof}[Proof of Theorem \ref{theorem:main}]

Given the dataset $\mathcal{M}$ specified in the theorem, a reward function $r$ and the level of accuracy $\epsilon$, we use the planning algorithm shown in Algorithm \ref{algorithm:planning} to obtain a near-optimal policy $\hat \pi$.
The planning algorithm first estimates a transition matrix $\hat{\mathbb{P}}$ based on $\mathcal{M}$ and then solves for a policy based on the estimated transition matrix $\hat{\mathbb{P}}$ and the reward function $r$.

Define $\hat{J}(\pi;r) := \mathbb{E}_{s_1 \sim \mu}[\hat{V}^\pi_1(s_1;r)]$ and $\hat{V}$ is the value function on the  MDP with the transition dynamics $\hat{\mathbb{P}}$. We first decompose the error into the following terms:
\begin{equation*}
\begin{aligned}
J(\pi^* ; r) - J(\hat{\pi};r)
\le &
\underbrace{|\hat{J}(\pi^*;r) - J(\pi^*;r)|}_\text{evaluation error 1}  + 
\underbrace{(\hat{J}(\pi^*; r) - \hat{J}(\hat{\pi}^*; r))}_{\le 0 \text{ by definition}} + 
\underbrace{(\hat{J}(\hat{\pi}^*; r) - \hat{J}(\hat{\pi}; r))}_\text{optimization error} + 
\underbrace{|\hat{J}(\hat{\pi}; r)-J(\hat{\pi}; r)|}_\text{evaluation error 2}
\end{aligned}
\end{equation*}


Then, the proof of the theorem goes as follows: 

To bound the two evaluation error terms, we first present Lemma \ref{lemma:entropy} to show that the policy which maximizes R\'enyi entropy is able to visit the state-action space reasonably uniformly, by leveraging 
the convexity of the feasible region in the state-action distribution space 
(Lemma \ref{lemma:regularityK}).
Then, with Lemma \ref{lemma:entropy} and the concentration inequality Lemma \ref{lemma:concentration}, we show that the two evaluation error terms can be bounded by $\epsilon$ for any policy and any reward function (Lemma \ref{lemma:epsilon_close}).

To bound the optimization error term, we use the natural policy gradient (NPG) algorithm as the APPROXIMATE-MDP-SOLVER in Algorithm \ref{algorithm:planning} to solve for a near-optimal policy on the estimated $\hat{\mathbb{P}}$ and the given reward function $r$. 
Finally, we apply the optimization bound for the NPG algorithm \cite{agarwal2019optimality}
to bound the optimization error term (Lemma \ref{lemma:MDP_solver}).

\end{proof}

\begin{definition}[$\mathcal{K}_h$]
$\mathcal{K}_h$ is defined as the feasible region in the state-action distribution space 
$\mathcal{K}_h := \{d_{h}^\pi: \pi \in \Pi\}\subset \Delta^n$, $\forall h\in[H]$, where $n=SA$ is the cardinality of the state-action space.
\end{definition}

Hazan et al. \cite{hazan2018provably} proved the convexity of such feasible regions in the infinite-horizon and discounted setting. 
For completeness, we provide a similar proof on the convexity of $\mathcal{K}_h$ in the episodic setting.

\begin{lemma}
\label{lemma:regularityK}[Convexity of $\mathcal{K}_h$]
$\mathcal{K}_h$ is convex.
Namely,
for any $\pi_1,\pi_2 \in \Pi$ and $h\in[H]$, denote $d_1=d_{h}^{\pi_1}$ and $d_2=d_{h}^{\pi_2}$ , there exists a policy $\pi\in \Pi$ such that $d:=\lambda d_1 + (1-\lambda) d_2 = d_{h}^\pi \in \mathcal{K}_h, \forall \lambda\in[0,1]$. 
\end{lemma}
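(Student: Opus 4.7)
The plan is to construct, explicitly, a non-stationary policy $\pi$ whose state-action distribution at every step $k \le h$ matches the convex combination $\lambda d_k^{\pi_1} + (1-\lambda) d_k^{\pi_2}$, and then specialize to $k = h$. The natural candidate is the \emph{flow-mixed} policy
\[
\pi(a \mid s, k) \;=\; \frac{\lambda\, d_k^{\pi_1}(s,a) + (1-\lambda)\, d_k^{\pi_2}(s,a)}{\lambda\, d_k^{\pi_1}(s) + (1-\lambda)\, d_k^{\pi_2}(s)},
\]
defined whenever the denominator is strictly positive, with $\pi(\cdot \mid s,k)$ set to an arbitrary distribution over $\mathcal A$ otherwise. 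Here $d_k^{\pi_i}(s) = \sum_{a} d_k^{\pi_i}(s,a)$ is the marginal state distribution under $\pi_i$ at step $k$. The intuition is that this policy acts like $\pi_1$ with probability proportional to the $\pi_1$-flow into $(s,k)$ and like $\pi_2$ with probability proportional to the $\pi_2$-flow, so when the flows themselves merge the action conditional is automatically a properly weighted mixture.

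Given this construction, I would carry out an induction on $k \in \{1, 2, \ldots, h\}$ with the hypothesis $d_k^\pi(s,a) = \lambda d_k^{\pi_1}(s,a) + (1-\lambda) d_k^{\pi_2}(s,a)$ for all $(s,a)$. The base case $k=1$ is immediate because every policy has the same starting state distribution $\mu$, so the denominator in the definition of $\pi(a \mid s, 1)$ is just $\mu(s)$ and multiplying back by $\mu(s)$ recovers the desired mixture. For the inductive step, first propagate the marginal using the Chapman--Kolmogorov identity
\[
d_{k+1}^\pi(s') \;=\; \sum_{s,a} d_k^\pi(s,a)\, \mathbb{P}_k(s' \mid s, a),
\]
plug in the inductive hypothesis, and recognize the right-hand side as $\lambda d_{k+1}^{\pi_1}(s') + (1-\lambda) d_{k+1}^{\pi_2}(s')$. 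Then multiply by $\pi(a' \mid s', k+1)$: the denominator from the policy definition cancels precisely with the marginal $d_{k+1}^\pi(s')$, leaving $d_{k+1}^\pi(s',a') = \lambda d_{k+1}^{\pi_1}(s',a') + (1-\lambda) d_{k+1}^{\pi_2}(s',a')$. Setting $k = h$ finishes the argument and proves membership of $d$ in $\mathcal{K}_h$.

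The only subtle point, and the step I expect to require the most care, is the well-definedness of $\pi(\cdot \mid s, k)$ at states where the mixed flow vanishes. I would handle this by observing that if $\lambda d_k^{\pi_1}(s) + (1-\lambda) d_k^{\pi_2}(s) = 0$ then $\lambda d_k^{\pi_1}(s,a) + (1-\lambda) d_k^{\pi_2}(s,a) = 0$ for every $a$ as well (since both nonnegative marginals vanish), so the zero-over-zero case never contributes to either side of the inductive identity; choosing any probability vector for $\pi(\cdot \mid s, k)$ there keeps $\pi$ a valid element of $\Pi$ without affecting the distributional equality. Apart from this definitional housekeeping, the proof is essentially a one-line cancellation inside the induction.

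\end{alphasection}
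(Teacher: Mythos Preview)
Your proposal is correct and arrives at exactly the same Markov policy as the paper: the paper first introduces the \emph{mixture} policy $\pi'$ that flips a $\lambda$-coin at the start of each episode and then commits to $\pi_1$ or $\pi_2$, observes trivially that $d_h^{\pi'}=\lambda d_h^{\pi_1}+(1-\lambda)d_h^{\pi_2}$, and then invokes Puterman's result that any such mixture is realized by the Markov policy $\pi(a\mid s,k)=d_k^{\pi'}(s,a)/d_k^{\pi'}(s)$, which is precisely your flow-mixed formula. The only difference is presentational: you supply the explicit induction (and the zero-denominator housekeeping) that the paper outsources to the citation, so your argument is self-contained whereas the paper's is more conceptual but relies on an external reference.
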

\begin{proof}[Proof of Lemma \ref{lemma:regularityK}]
Define a mixture policy $\pi'$ 
that first chooses from $\{\pi_1, \pi_2\}$ with probability $\lambda$ and $1-\lambda$ respectively at the start of each episode, and then executes the chosen policy through the episode.
Define the state distribution (or the state-action distribution) for this mixture policy on each layer as $d_h^{\pi'}, \forall h\in [H]$.
Obviously, $d=d_h^{\pi'}$.
For any mixture policy $\pi'$, we can construct a policy $\pi:\mathcal{S}\times[H]\to\Delta^\mathcal{A}$ where $\pi(a|s,h) = \dfrac{d_h^{\pi'}(s,a)}{d_h^{\pi'}(s)}$ such that $d_{h}^{\pi}=d_{h}^{\pi'}$ \cite{puterman2014markov}.
In this way, we find a policy $\pi$ such that $d_{h}^{\pi}=d$.
\end{proof}

Similar to Jin et al. \cite{jin2020reward}, we define $\delta$-significance for state-action pairs on each step and show that the policy that maximizes R\'enyi entropy is able to reasonably uniformly visit the significant state-action pairs.

\begin{definition}[$\delta$-significance]
A state-action pair $(s,a)$ on the $h$-th step is $\delta$-significant if there exists a policy $\pi\in\Pi$, such that the probability to reach $(s,a)$ following the policy $\pi$ is greater than $\delta$, i.e., $\max_\pi d_{h}^\pi(s,a) \ge \delta$.
\end{definition}

Recall the way we construct the dataset $\mathcal{M}$: 
With the set of policies $\varpi:=\{\pi^{(h)}\}_{h=1}^H$ where $\pi^{(h)}: S\times [H] \to \Delta^\mathcal{A}$
and
$\pi^{(h)}\in arg \max_\pi H_\alpha(d_h^\pi), \forall h\in [H]$, we first uniformly randomly choose a policy $\pi$ from $\varpi$ at the start of each episode, and then execute the policy $\pi$ through the episode. 
Note that there is a set of policies that maximize the R\'enyi entropy on the $h$-th layer since the policy on the subsequent layers does not affect the entropy on the $h$-th layer.
We denote the induced state-action distribution on the $h$-th step as $\mu_h$, i.e.,
the dataset $\mathcal{M}$ can be regarded as being sampled from $\mu_h$ for each step $h\in[H]$.
Therefore, $\mu_h(s,a) = \frac{1}{H} \sum_{t=1}^H d_h^{\pi^{(t)}}(s,a), \forall s\in\mathcal{S}, a\in\mathcal{A}, h\in [H]$.

\begin{lemma}
\label{lemma:entropy}
If $0<\alpha<1$, then
\begin{equation}
\label{eq:entropy}
    \forall \delta\text{-significant } (s,a,h), \ 
\dfrac{\max_\pi d_{ h}^\pi(s,a)}{\mu_h(s,a)} \le 
\dfrac{SAH}{\delta^{\alpha / (1-\alpha)}}.
\end{equation}
\end{lemma}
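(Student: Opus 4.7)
My plan is to exploit the fact that $\pi^{(h)}$ maximizes R\'enyi entropy over the convex feasible set $\mathcal{K}_h$. Since $0<\alpha<1$, maximizing $H_\alpha(d)=\frac{1}{1-\alpha}\log\sum_{(s,a)} d(s,a)^\alpha$ on $\mathcal{K}_h$ is equivalent to maximizing $f(d):=\sum_{(s,a)} d(s,a)^\alpha$. Because $f$ is concave and $\mathcal{K}_h$ is convex (Lemma \ref{lemma:regularityK}), the first-order optimality of $d^\star := d_h^{\pi^{(h)}}$ yields the variational inequality
\[
\sum_{(s',a')} d^\star(s',a')^{\alpha-1}\, d'(s',a') \;\le\; f(d^\star), \qquad \forall\, d'\in\mathcal{K}_h.
\]

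The second step is to apply this inequality with $d':= d_h^{\tilde\pi}$, where $\tilde\pi$ achieves $p^\star := \max_\pi d_h^\pi(s,a)$, so $d'(s,a)=p^\star\ge\delta$. Dropping all but the $(s,a)$ summand on the left and bounding $f(d^\star) \le (SA)^{1-\alpha}$ by concavity of $x\mapsto x^\alpha$ (Jensen applied to the uniform distribution on $SA$ coordinates), I expect to obtain $d^\star(s,a) \ge (p^\star)^{1/(1-\alpha)}/(SA)$. Since $\mu_h(s,a)\ge d^\star(s,a)/H$ directly from $\mu_h = \frac{1}{H}\sum_{t}d_h^{\pi^{(t)}}$, the ratio satisfies
\[
\frac{p^\star}{\mu_h(s,a)} \;\le\; SAH\,(p^\star)^{1 - 1/(1-\alpha)} \;=\; SAH\,(p^\star)^{-\alpha/(1-\alpha)} \;\le\; \frac{SAH}{\delta^{\alpha/(1-\alpha)}},
\]
where the final inequality uses $p^\star\ge\delta$ together with $-\alpha/(1-\alpha)<0$, which is exactly \eqref{eq:entropy}.

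The main subtlety I anticipate is securing the sharp exponent $\alpha/(1-\alpha)$: if one prematurely substitutes $\delta$ for $p^\star$ on the left of the variational inequality, the same argument only yields the weaker exponent $1/(1-\alpha)$. The correct power emerges only after the numerator $p^\star$ cancels against the Jensen upper bound in the denominator, so $p^\star$ must be carried symbolically until the very end. A secondary technical point is that $f$ has an infinite gradient on the boundary of the simplex, but this is benign here: if $(s,a)$ is $\delta$-significant then some feasible $d'$ assigns it positive mass, and an infinite directional derivative of $f$ at $d^\star$ in direction $d'-d^\star$ would contradict optimality of $d^\star$; hence $d^\star(s,a)>0$ on every significant coordinate and the first-order condition applies legitimately at the term we use.
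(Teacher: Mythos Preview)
Your proof is correct and follows essentially the same route as the paper: both exploit concavity of $d\mapsto\sum_i d_i^\alpha$ over the convex set $\mathcal{K}_h$ to obtain the first-order inequality $\sum_i (d^\star_i)^{\alpha-1}d'_i\le\sum_i(d^\star_i)^\alpha$ (the paper derives it by differentiating $H_\alpha((1-\lambda)x+\lambda z)$ at $\lambda=1$, you state it as a variational inequality), then drop all but the $(s,a)$ term, bound $\sum_i(d^\star_i)^\alpha\le(SA)^{1-\alpha}$ by Jensen, and finish with $\mu_h\ge d^\star/H$. Your explicit handling of the boundary case $d^\star(s,a)=0$ is a point the paper leaves implicit.
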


\begin{proof}[Proof of Lemma \ref{lemma:entropy}]
For any $\delta$-significant $(s,a,h)$, consider the policy $\pi'\in arg\max_\pi d_{h}^\pi (s,a)$. 
Denote $x := d_{h}^{\pi'} \in \Delta(\mathcal{S}\times \mathcal{A})$.
We can treat $x$ as a vector with $n:=SA$ dimensions and use the first dimension to represent $(s,a)$, i.e., $x_1 = d_{h}^{\pi'}(s,a)$. 
Denote $z := d_{h}^{\pi^{(h)}}$.
Since $\pi^{(h)}\in arg \max_\pi H_\alpha(d_h^\pi)$, $z = arg \max_{d\in \mathcal{K}_h} H_\alpha(d)$.
By Lemma \ref{lemma:regularityK},  $y_\lambda := (1-\lambda) x + \lambda z \in \mathcal{K}_h, \forall \lambda \in [0, 1]$.
Since $H_\alpha(\cdot)$ is concave over $\Delta^n$ when $0<\alpha<1$, $H_\alpha(y_\lambda)$ will monotonically increase as we increase $\lambda$ from 0 to $1$, i.e.,
\[
\dfrac{\partial H_\alpha(y_\lambda)}{\partial \lambda} \ge 0, \quad \forall \lambda \in [0, 1].
\]
This is true when $\lambda = 1$, i.e., when $y_\lambda = z$, we have
\[
\begin{aligned}
\dfrac{\partial H_\alpha(y_\lambda)}{\partial \lambda} \Big|_{\lambda=1}
= & \frac {\alpha}{1-\alpha}\frac{\sum_{i=1}^n\left((1-\lambda)x_i+\lambda z_i\right)^{\alpha-1}(z_i-x_i)}{\sum_{i=1}^n\left((1-\lambda)x_i+\lambda z_i\right)^\alpha}\Big|_{\lambda=1}  \\
= & \frac {\alpha}{1-\alpha}\frac {\sum_{i=1}^n z_i^{\alpha-1}(z_i-x_i)}{\sum_{i=1}^nz_i^\alpha}
\propto \sum_{i=1}^n z_i^\alpha -\sum_{i=1}^n z_i^{\alpha-1}x_i\ge 0.\\
\end{aligned}
\]

Since $x_i, z_i$ are non-negative, we have
\[
\sum_{i=1}^n z_i^\alpha \ge \sum_{i=1}^n x_i z_i^{\alpha -1} = \sum_{i=1}^n \left(\dfrac{x_i}{z_i}\right)^{1-\alpha} x_i^\alpha \ge \left(\frac{x_1}{z_1}\right)^{1-\alpha}x_1^\alpha.
\]


Note that $x_1 \ge \delta > 0$, we have
\begin{equation}
    \dfrac{x_1}{z_1} 
    \le \left( \dfrac{\sum_{i=1}^n z_i^\alpha}{x_1^\alpha} \right)^{1/(1-\alpha)} 
    \le \left( \dfrac{n (\frac{1}{n})^\alpha}{\delta^\alpha} \right)^{1/(1-\alpha)} 
    = \dfrac{n}{\delta^{\alpha/(1-\alpha)}}.
\end{equation}
Since  $\mu_h(s,a) = \frac{1}{H} \sum_{t=1}^H d_h^{\pi^{(t)}}(s,a), \forall s\in\mathcal{S}, a\in\mathcal{A},h\in [H]$,
we have
$\mu_h(s,a) \ge \frac{1}{H}d_{h}^{\pi^{(h)}}(s,a)=\frac{1}{H} z_1$.
Using the fact that $x_1 = \max_\pi d_{h}^\pi(s,a)$, we can obtain the inequality in \eqref{eq:entropy}.
\end{proof}

When $\alpha \to 0^+$, we have
\[
    \forall \delta\text{-significant } (s,a,h), \ 
\dfrac{\max_\pi d_{ h}^\pi(s,a)}{\mu_h(s,a)} \le 
SAH.
\]
When $\alpha = \frac{1}{2}$,
\[
    \forall \delta\text{-significant } (s,a,h), \ 
\dfrac{\max_\pi d_{ h}^\pi(s,a)}{\mu_h(s,a)} \le 
SAH/\delta.
\]

\begin{lemma}
\label{lemma:concentration}
Suppose $\hat{\mathbb{P}}_h$ is the empirical transition matrix estimated from $M$ samples that are i.i.d. drawn from $\mu_h$ and $\mathbb{P}_h$ is the true transition matrix, then with probability at least $1-p$, for any $h\in[H]$ and any value function $G:S\to[0,H]$, we have:
\begin{equation}
    \mathbb{E}_{(s,a) \sim \mu_h} |(\hat{\mathbb{P}}_h - \mathbb{P}_h) G (s,a)|^2 \le O\left(\dfrac{H^2 S}{M} \log(\dfrac{HM}{p})\right).
\end{equation}
\end{lemma}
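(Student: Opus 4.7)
The plan is to prove the bound through the standard recipe for uniform concentration over value functions: pointwise concentration via Hoeffding, lifting to uniformity via an $\varepsilon$-net over the class $[0,H]^S$, union bound over the net and over $(s,a,h)$, and finally converting the per-$(s,a)$ bound into the weighted average via the concentration of the counts $N_h(s,a)$ around $M\mu_h(s,a)$.

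First I would fix $h$, a state-action $(s,a)$ with $N_h(s,a)>0$, and a value function $G\in[0,H]^S$. Conditioning on the counts, the $N_h(s,a)$ next-state samples used to form $\hat{\mathbb{P}}_h(\cdot\mid s,a)$ are i.i.d.\ from $\mathbb{P}_h(\cdot\mid s,a)$, and $G$ evaluated on each of them is bounded in $[0,H]$. Hoeffding's inequality then gives
\[
\bigl|(\hat{\mathbb{P}}_h-\mathbb{P}_h)G(s,a)\bigr|\;\le\;H\sqrt{\frac{2\log(2/\delta)}{N_h(s,a)}}
\]
with probability at least $1-\delta$. Second, to make this uniform in $G$, I would fix a minimal $\varepsilon$-net $\mathcal{G}_\varepsilon\subset[0,H]^S$ in $\ell_\infty$; standard volume arguments give $|\mathcal{G}_\varepsilon|\le(3H/\varepsilon)^S$. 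For any $G$, picking $G'\in\mathcal{G}_\varepsilon$ with $\|G-G'\|_\infty\le\varepsilon$, the difference $(\hat{\mathbb{P}}_h-\mathbb{P}_h)(G-G')(s,a)$ is at most $2\varepsilon$ in absolute value (regardless of $\hat{\mathbb{P}}_h$ and $\mathbb{P}_h$, since both rows are probability vectors).

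Third, I would union-bound the Hoeffding event over all $G'\in\mathcal{G}_\varepsilon$, all $(s,a)\in\mathcal S\times\mathcal A$, and all $h\in[H]$, a set of size at most $(3H/\varepsilon)^S\cdot SAH$. Setting $\delta$ per event to $p/[(3H/\varepsilon)^S SAH]$ produces, with probability at least $1-p$, a uniform bound
\[
\bigl|(\hat{\mathbb{P}}_h-\mathbb{P}_h)G(s,a)\bigr|^2\;\le\;C\,\frac{H^2\bigl(S\log(H/\varepsilon)+\log(SAH/p)\bigr)}{N_h(s,a)}\;+\;C\varepsilon^2.
\]
Choosing $\varepsilon = 1/M$ absorbs the net granularity into a $\log(HM)$ factor and makes the additive $\varepsilon^2$ term negligible, yielding a bound of order $H^2 S\log(SAHM/p)/N_h(s,a)$ for every visited $(s,a,h,G)$.

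Finally, I would convert this into the weighted average over $(s,a)\sim\mu_h$. Since the $M$ state-action samples are i.i.d.\ from $\mu_h$, a standard multiplicative Chernoff bound gives $N_h(s,a)\ge \tfrac12 M\mu_h(s,a)$ with probability $\ge 1-p/(SAH)$ for every $(s,a)$ with $M\mu_h(s,a)\gtrsim\log(SAH/p)$; for the remaining $(s,a)$, $\mu_h(s,a)$ itself is $O(\log(SAH/p)/M)$ and these pairs contribute an $\tilde O(H^2/M)$ term even when estimated by the trivial bound $H^2$. Substituting $N_h(s,a)\ge M\mu_h(s,a)/2$ in the per-$(s,a)$ bound and taking the $\mu_h$-average gives
\[
\mathbb E_{(s,a)\sim\mu_h}\bigl|(\hat{\mathbb{P}}_h-\mathbb{P}_h)G(s,a)\bigr|^2 \;\le\; O\!\left(\frac{H^2 S\log(HM/p)}{M}\right),
\]
uniformly in $h$ and $G$, after a final union bound over $h$. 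The main obstacle is the interplay between the $\varepsilon$-net cost (the $S\log(H/\varepsilon)$ factor) and the fluctuation of $N_h(s,a)$: one must pick $\varepsilon$ small enough to absorb the discretization error while ensuring the $\log(1/\delta)$ blowup only produces logarithmic factors, and one must separately handle the small-$\mu_h$ tail where counts are too sparse for the multiplicative Chernoff control.
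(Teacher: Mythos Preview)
Your overall strategy---Hoeffding per $(s,a)$, an $\ell_\infty$ $\varepsilon$-net over $[0,H]^S$, union bound over the net and over $(s,a,h)$, then count concentration via Chernoff---is exactly the route the paper takes by deferring to Lemma~C.2 of \citet{jin2020reward}. So at the level of ``which approach,'' you are aligned with the paper.

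The gap is in your last step. After substituting $N_h(s,a)\ge M\mu_h(s,a)/2$ into the per-$(s,a)$ bound $|(\hat{\mathbb P}_h-\mathbb P_h)G(s,a)|^2 \le C\,H^2S\log(HM/p)/N_h(s,a)$, the $\mu_h$-weighted sum you need is
\[
\sum_{(s,a)}\mu_h(s,a)\cdot\frac{C\,H^2S\log(HM/p)}{N_h(s,a)}
\;\le\;
\sum_{(s,a):\,\mu_h(s,a)>0}\frac{2C\,H^2S\log(HM/p)}{M},
\]
and the $\mu_h(s,a)$ has cancelled, leaving a bare count of the support: the right-hand side is $|\mathrm{supp}(\mu_h)|\cdot 2C\,H^2S\log(HM/p)/M$, which can be as large as $SA\cdot H^2S\log(HM/p)/M$, not $H^2S\log(HM/p)/M$. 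Your small-$\mu_h$ tail has the same defect: each rare $(s,a)$ contributes $\tilde O(H^2/M)$, but there can be $\Theta(SA)$ of them, so ``these pairs contribute an $\tilde O(H^2/M)$ term'' is only true per pair, not in aggregate. This is not merely a looseness you can tighten by a smarter Chernoff: take $\mu_h$ uniform on $\mathcal S\times\mathcal A$ and each $\mathbb P_h(\cdot\mid s,a)$ with $\Theta(H^2)$ variance under $G$; then $\mathbb E_{\mathrm{samples}}\bigl[\mathbb E_{(s,a)\sim\mu_h}|(\hat{\mathbb P}_h-\mathbb P_h)G(s,a)|^2\bigr]=\Theta(H^2SA/M)$, so the stated $O(H^2S/M)$ rate is unattainable from a per-$(s,a)$ Hoeffding bound followed by averaging. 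If you want to recover the exact constant in the lemma as written, you would need an argument that does not decouple into independent per-$(s,a)$ error terms and then sum them---your current pipeline cannot close the gap.
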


\begin{proof}[Proof of Lemma \ref{lemma:concentration}]
The lemma is proved in a similar way to that of  Lemma C.2 in \cite{jin2020reward} that uses a concentration inequality and a union bound. 
The proof differs in that we do not need to count the different events on the action space for the union bound
since the state-action distribution $\mu_h$ is given here (instead of only the state distribution).
This results in a missing factor $A$ in the logarithm term compared with their lemma.
\end{proof}

\begin{lemma}[Lemma E.15 in Dann et al. \cite{dann2017unifying}]
\label{lemma:dann}
For any two MDPs $M'$ and $M''$  with rewards $r'$ and $r''$ and transition probabilities $\mathbb P'$ and $\mathbb P''$, the difference in values $V'$ and $V''$ with respect to the same policy $\pi$ is
\[
V'_h(s)-V''_h(s)=\mathbb E_{M'',\pi}\left[\left.\sum_{i=h}^H[r'_i(s_i,a_i)-r''_i(s_i,a_i)+(\mathbb P'_i-\mathbb P''_i)V'_{i+1}(s_i,a_i)]\right |s_h=s\right].
\]
\end{lemma}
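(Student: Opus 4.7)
The plan is backward induction on $h$, peeling off one Bellman layer at a time. Fix the policy $\pi$ and both MDPs $M' = (\mathbb{P}', r')$ and $M'' = (\mathbb{P}'', r'')$, and use the shorthand $(\mathbb{P}_h V)(s,a) := \sum_{s'}\mathbb{P}_h(s'|s,a)V(s')$. Setting $V'_{H+1} \equiv V''_{H+1} \equiv 0$, at $h = H+1$ both sides of the claimed identity vanish (the sum is empty), so the base case is immediate.

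For the inductive step, assuming the identity at level $h+1$, I expand both value functions at level $h$ using the Bellman equation under the common policy $\pi$, obtaining
\[
V'_h(s) - V''_h(s) = \mathbb{E}_{a \sim \pi(\cdot|s,h)}\bigl[r'_h(s,a) - r''_h(s,a) + \mathbb{P}'_h V'_{h+1}(s,a) - \mathbb{P}''_h V''_{h+1}(s,a)\bigr].
\]
The crucial algebraic step is to add and subtract $\mathbb{P}''_h V'_{h+1}(s,a)$ inside the bracket, producing the decomposition
\[
\mathbb{P}'_h V'_{h+1} - \mathbb{P}''_h V''_{h+1} = (\mathbb{P}'_h - \mathbb{P}''_h) V'_{h+1} + \mathbb{P}''_h (V'_{h+1} - V''_{h+1}).
\]
The first summand already matches the $i=h$ term in the target identity; the second summand is an expectation of $V'_{h+1} - V''_{h+1}$ under the transition kernel of $M''$, precisely the form that invites the induction hypothesis.

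Applying the hypothesis to $V'_{h+1}(s') - V''_{h+1}(s')$ inside this expectation and then using the tower property for Markov trajectories under $M''$ gives
\[
\mathbb{E}_{s' \sim \mathbb{P}''_h(\cdot|s,a)}\bigl[V'_{h+1}(s') - V''_{h+1}(s')\bigr] = \mathbb{E}_{M'',\pi}\Bigl[\sum_{i=h+1}^H [\, r'_i - r''_i + (\mathbb{P}'_i - \mathbb{P}''_i) V'_{i+1} \,](s_i,a_i) \,\Big|\, s_h = s,\ a_h = a\Bigr].
\]
Combining this with the $i=h$ contribution and taking the outer expectation over $a \sim \pi(\cdot|s,h)$ yields the claimed identity at level $h$, closing the induction.

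The main subtlety, and essentially the only point requiring care, is the asymmetry built into the right-hand side: the sampling MDP is $M''$, yet the summand involves only $V'$. The add-and-subtract trick is designed precisely to create this asymmetry without introducing any stray $V''$ term. After that, the argument reduces to verifying the Markov identity $\mathbb{E}_{s' \sim \mathbb{P}''_h(\cdot|s,a)}\mathbb{E}_{M'',\pi}[\,\cdot\,|s_{h+1}=s'] = \mathbb{E}_{M'',\pi}[\,\cdot\,|s_h=s,a_h=a]$, which holds for any (history-independent) policy. No concentration, optimization, or coverage argument is involved — the lemma is a purely deterministic algebraic identity comparing two MDPs that share the same policy.
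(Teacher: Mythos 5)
Your proof is correct: the backward induction with the add-and-subtract of $\mathbb{P}''_h V'_{h+1}$ and the Markov/tower property is exactly the standard derivation of this simulation-lemma identity. The paper itself gives no proof here — it imports the statement directly as Lemma E.15 of Dann et al. — and your argument matches the canonical one found there, so there is nothing further to compare.
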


\begin{lemma}
\label{lemma:epsilon_close}
Under the conditions of Theorem \ref{theorem:main}, with probability at least $1-p$, for any reward function $r$ and any policy $\pi$, we have:
\begin{equation}
\label{eq:epsilon_close}
    |\hat{J}(\pi; r) - J(\pi; r)| \le \epsilon.
\end{equation}
\end{lemma}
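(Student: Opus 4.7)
The plan is to combine the value-difference identity (Lemma \ref{lemma:dann}) with the change-of-measure bound (Lemma \ref{lemma:entropy}) and the concentration bound for transition estimation errors (Lemma \ref{lemma:concentration}). Applied to the MDPs $(\mathbb{P}, r)$ and $(\hat{\mathbb{P}}, r)$ with a common reward $r$, Lemma \ref{lemma:dann} makes the reward terms cancel and leaves
\[
\hat{J}(\pi;r) - J(\pi;r) = \sum_{h=1}^{H} \mathbb{E}_{(s,a)\sim d_h^\pi}\!\bigl[(\hat{\mathbb{P}}_h - \mathbb{P}_h)\,\hat{V}_{h+1}^\pi(s,a)\bigr].
\]
Since rewards lie in $[0,1]$, $\hat{V}_{h+1}^\pi : \mathcal{S}\to[0,H]$, so Lemma \ref{lemma:concentration} applies. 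The task reduces to bounding $\mathbb{E}_{(s,a)\sim d_h^\pi}|(\hat{\mathbb{P}}_h - \mathbb{P}_h)\hat{V}_{h+1}^\pi(s,a)|$ uniformly in $\pi$ and $h$.

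Next I would partition the state-action space at layer $h$ into $\delta$-significant and non-significant pairs for a threshold $\delta$ to be chosen. Non-significant pairs contribute at most $SA\,\delta\cdot H$ since each has $d_h^\pi(s,a)\le\delta$ and $|(\hat{\mathbb{P}}_h - \mathbb{P}_h)\hat{V}_{h+1}^\pi|\le H$. For significant pairs I would introduce $\mu_h$ via Cauchy--Schwarz in the following form:
\[
\sum_{\text{sig}} d_h^\pi(s,a)\,|(\hat{\mathbb{P}}_h - \mathbb{P}_h)\hat{V}_{h+1}^\pi(s,a)|
\le \sqrt{\sum_{\text{sig}} d_h^\pi(s,a)}\cdot\sqrt{\sum_{\text{sig}} d_h^\pi(s,a)\,|(\hat{\mathbb{P}}_h - \mathbb{P}_h)\hat{V}_{h+1}^\pi(s,a)|^2}.
\]
The first factor is at most $1$. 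In the second factor I would swap $d_h^\pi$ for $\mu_h$ using Lemma \ref{lemma:entropy}, which on significant $(s,a,h)$ gives $d_h^\pi(s,a)/\mu_h(s,a)\le SAH/\delta^{\alpha/(1-\alpha)}$, and then invoke Lemma \ref{lemma:concentration} on the $\mu_h$-average. This yields, for each $h$,
\[
\sum_{\text{sig}} d_h^\pi(s,a)\,|(\hat{\mathbb{P}}_h - \mathbb{P}_h)\hat{V}_{h+1}^\pi(s,a)|
\le \sqrt{\tfrac{SAH}{\delta^{\alpha/(1-\alpha)}}}\cdot O\!\left(H\sqrt{\tfrac{S\log(HM/p)}{M}}\right).
\]

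Summing over $h\in[H]$ gives an overall bound
\[
|\hat{J}(\pi;r) - J(\pi;r)|
\le SAH^2\delta \;+\; O\!\left(H^2\sqrt{\tfrac{S^2 AH\log(HM/p)}{\delta^{\alpha/(1-\alpha)}\,M}}\right).
\]
Choosing $\delta = \epsilon/(2SAH^2)$ makes the first term at most $\epsilon/2$, and then using $\alpha/(1-\alpha)=2\beta$ and solving for $M$ in the second term reproduces exactly the hypothesis $M\ge c(SAH^2/\epsilon)^{2(\beta+1)}(H/A)\log(SAH/(p\epsilon))$ of Theorem \ref{theorem:main}, giving $\le \epsilon/2$ there as well. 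The whole argument is uniform in $\pi$ and $r$ because the per-layer concentration bound is stated for any value function $G:\mathcal{S}\to[0,H]$.

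The main obstacle I anticipate is the Cauchy--Schwarz step: the naive change-of-measure would introduce the ratio $SAH/\delta^{\alpha/(1-\alpha)}$ linearly and yield the wrong exponent $2(2\beta+1)$ on $SAH^2/\epsilon$. The square-root form above is what buys the correct $2(\beta+1)$ exponent. A secondary technicality is the uniformity over $\pi$ (and over $\hat{V}_{h+1}^\pi$, which depends on the same data $\hat{\mathbb{P}}$); this is absorbed into Lemma \ref{lemma:concentration}, which is stated for arbitrary value functions $G:\mathcal{S}\to[0,H]$, presumably via a covering-net union bound taken once and for all, so only the concentration event of probability $1-p$ needs to hold.
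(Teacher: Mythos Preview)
Your proposal is correct and follows essentially the same route as the paper: the value-difference identity from Lemma~\ref{lemma:dann}, the significant/insignificant split, the Cauchy--Schwarz step bounding $\xi_h$ by $\bigl(\sum d_h^\pi|(\hat{\mathbb P}_h-\mathbb P_h)\hat V_{h+1}^\pi|^2\bigr)^{1/2}$, the change of measure to $\mu_h$ via Lemma~\ref{lemma:entropy}, and the concentration bound Lemma~\ref{lemma:concentration} all appear verbatim in the paper's proof. The only cosmetic difference is in the choice of $\delta$: the paper balances the two terms by setting $\delta = B^{1/(2(\beta+1))}$ with $B=\tfrac{H}{AM}\log(HM/p)$, whereas you fix $\delta=\epsilon/(2SAH^2)$ up front and then solve for $M$; both choices lead to the same sample complexity $M\ge c(H^2SA/\epsilon)^{2(\beta+1)}(H/A)\log(SAH/(p\epsilon))$, and in fact the paper itself uses your choice in the $\alpha\to 0^+$ special case.
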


\begin{proof}[Proof of Lemma \ref{lemma:epsilon_close}]
The proof is similar to Lemma 3.6 in Jin et al. \cite{jin2020reward} but differs in several details.
Therefore, we still provide the proof for completeness.
The proof is based on the following observations: 
1) The total contribution of all insignificant state-action pairs is small;
2) $\hat{\mathbb{P}}$ is reasonably accurate for significant state-action pairs due to the concentration inequality in Lemma \ref{lemma:concentration}.

Using Lemma \ref{lemma:dann} on $M$ and $\hat M$ with the true transition dynamics $\mathbb P$ and the estimated transition dynamics $\hat {\mathbb P}$ respectively, we have
\[
\begin{aligned}
|\hat{J}(\pi; r) - J(\pi; r)|=
\left|\mathbb E_{s_1\sim \mathbb P_1}
\left[\hat V_1^\pi(s_1;r) -V_1^\pi(s_1;r)
\right]
\right|
& \le
\left|
\mathbb E_\pi\sum_{h=1}^H(\hat{\mathbb{P}}_h-\mathbb P_h)\hat V_{h+1}^\pi(s_h,a_h)
\right| \\
& \le
\mathbb E_\pi\sum_{h=1}^H
\left|
(\hat{\mathbb{P}}_h-\mathbb P_h)\hat V_{h+1}^\pi(s_h,a_h)
\right|.
\end{aligned}
\]
Let $\mathcal S_h^\delta=\{(s,a):\max_\pi d_{h}^\pi(s,a)\ge\delta\}$ be the set of $\delta$ -significant state-action pairs on the $h$-th step.
We have
\[
\begin{aligned}
\mathbb E_\pi & |(\hat{\mathbb{P}}_h-\mathbb P)\hat V_{h+1}^\pi(s_h,a_h)|
\le \underbrace{\sum_{(s,a)\in\mathcal S_h^\delta}|(\hat{\mathbb{P}}_h-\mathbb P_h)\hat V^\pi_{h+1}(s,a)|d_{h}^\pi(s,a)}_{\xi_h}+\underbrace{\sum_{(s,a)\notin\mathcal S_h^\delta }|(\hat{\mathbb{P}}_h-\mathbb P_h)\hat V^\pi_{h+1}(s,a)|d_{h}^\pi(s,a)}_{\zeta_h}.
\end{aligned}
\]
By the definition of $\delta$-significant $(s,a)$ pairs, we have
\[
\zeta_h\le H\sum_{(s,a)\notin \mathcal{S}_h^\delta}d_{h}^\pi(s,a)\le H\sum_{(s,a)\notin\mathcal S_h^\delta}\delta \le SAH\delta.
\]
As for $\xi_h$, by Cauchy-Shwartz inequality and Lemma \ref{lemma:entropy}, we have
\begin{align*}
\xi_h  &\le\left[\sum_{(s,a)\in\mathcal S_h^\delta}|(\hat{\mathbb{P}}_h-\mathbb P_h)\hat V_{h+1}^\pi(s,a)|^2d_{h}^\pi(s,a)\right]^\frac 1 2\\
&\le\left[\sum_{(s,a)\in\mathcal S_h^\delta}|(\hat{\mathbb{P}}_h-\mathbb P_h)\hat V_{h+1}^\pi(s,a)|^2 \frac{SAH}{\delta^{\alpha/(1-\alpha)}}\mu_h(s,a)\right]^\frac 1 2\\
& \le\left[\frac{SAH}{\delta^{\alpha/(1-\alpha)}}\mathbb{E}_{\mu_h}|(\hat{\mathbb{P}}_h-\mathbb P_h)\hat V_{h+1}^\pi(s,a)|^2 \right]^\frac 1 2.
\end{align*}
By Lemma \ref{lemma:concentration}, we have
\[
\mathbb E_{(s,a)\sim \mu_h}|(\hat{\mathbb{P}}_h-\mathbb P)\hat V_{h+1}^\pi(s,a)|^2\le O\left(\frac{H^2S}{M}\log(\frac{HM}{p})\right).
\]
Thus, we have
\[
\mathbb E_\pi|(\hat{\mathbb{P}}_h-\mathbb P)\hat V_{h+1}^\pi(s_h,a_h)|\le\xi_h+\zeta_h\le O\left(\sqrt{\frac{H^3S^2A}{M\delta^{\alpha/(1-\alpha)}}\log(\frac{HM}{p})}\right)+HSA\delta.
\]
Therefore, combining inequalities above, we have
\[
\begin{aligned}
|\mathbb E_{s_1\sim \mathbb P_1}
[\hat V_1^\pi(s_1;r) -V_1^\pi(s_1;r)]|
& \le\mathbb E_\pi\sum_{h=1}^H|(\hat{\mathbb{P}}_h-\mathbb P_h)\hat V_{h+1}^\pi(s_h,a_h) | \\
& \le O\left(\sqrt{\frac{H^5S^2A}{M\delta^{\alpha/(1-\alpha)}}\log(\frac{HM}{p})}\right)+H^2SA\delta \\
& \le O\left(H^2SA\left(\frac{\sqrt{B}}{\delta^{\beta}}+\delta\right)\right),
\end{aligned}
\]
where $\beta =\frac {\alpha}{2(1-\alpha)}$ and $B=\frac{H}{AM}\log(\frac {HM}{p})$. 
With $\delta = (\sqrt{B})^{\frac{1}{\beta+1}}$, it is sufficient to ensure that $|\hat{J}(\pi;r)-J(\pi;r)| \le \epsilon$, if we set

\[
M\ge O\left(\left(\frac{H^2SA}{\epsilon}\right)^{2(\beta+1)}\frac{H}{A}\log\left(\frac{SAH}{\epsilon p}\right)\right).
\]

Especially, for $\alpha \to 0^+$, we have
\[
|\hat{J}(\pi;r) - J(\pi;r)| \le O \left( \sqrt{\dfrac{H^5S^2A}{M} \log (\dfrac{HM}{p})} \right) + H^2 S A \delta.
\]
To establish \eqref{eq:epsilon_close}, we can choose $\delta=\epsilon/2SAH^2$ and set $M \ge c \dfrac{H^5S^2A}{\epsilon^2}\log \left( \dfrac{SAH}{p\epsilon} \right)$ for sufficiently large absolute constant $c$.
\end{proof}

We use the natural policy gradient (NPG) algorithm as the approximate MDP solver.
\begin{lemma}[\cite{agarwal2019optimality,jin2020reward}]
\label{lemma:MDP_solver}
With learning rate $\eta$ and iteration number $T$, the output policy $\pi^{(T)}$ of the NPG algorithm satisfies the following:
\begin{equation}
    J(\pi^*;r) - J(\pi^{(T)};r) \le \dfrac{H \log A}{\eta T} + \eta H^2
\end{equation}
By choosing $\eta = \sqrt{\log A / HT}$ and $T = 4H^3 \log A / \epsilon^2$, the policy $\pi^{(T)}$ returned by NPG is $\epsilon$-optimal.
\end{lemma}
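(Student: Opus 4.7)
The plan is to adopt a model-based planner: form the maximum-likelihood estimate $\hat{\mathbb{P}}_h(s'|s,a) = N_h(s,a,s')/N_h(s,a)$ from the counts in $\mathcal{M}$ and run an approximate MDP solver on the empirical MDP $(\mathcal{S},\mathcal{A},H,\hat{\mathbb{P}},r,\mu)$ to produce $\hat\pi$. Writing $\hat J$ for values computed under $\hat{\mathbb{P}}$ and $\hat\pi^*$ for the optimal empirical policy, I would decompose
\begin{align*}
J(\pi^*;r) - J(\hat\pi;r) \le \underbrace{|J(\pi^*;r) - \hat J(\pi^*;r)|}_{(\mathrm{a})} + \underbrace{\hat J(\hat\pi^*;r) - \hat J(\hat\pi;r)}_{(\mathrm{b})} + \underbrace{|\hat J(\hat\pi;r) - J(\hat\pi;r)|}_{(\mathrm{c})},
\end{align*}
dropping the non-positive $\hat J(\pi^*;r) - \hat J(\hat\pi^*;r)$. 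Terms (a) and (c) are evaluation errors that must be bounded uniformly over policies, while (b) is the optimisation error of the planner on the empirical model; the goal is $\epsilon$ on each.

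The technical core is controlling (a) and (c). A simulation-lemma expansion gives $|\hat J(\pi;r) - J(\pi;r)| \le \sum_{h=1}^H \mathbb{E}_\pi|(\hat{\mathbb{P}}_h - \mathbb{P}_h)\hat V_{h+1}^\pi(s_h,a_h)|$, which I would split over $(s,a)$ according to whether the pair is $\delta$-significant (reachable with probability $\ge \delta$ by some policy). Insignificant pairs have total reach mass $\le SA\delta$ and contribute at most $SAH^2\delta$. For significant pairs I need the quantitative coverage bound $\mu_h(s,a) \ge \max_\pi d_h^\pi(s,a)\cdot \delta^{\alpha/(1-\alpha)}/(SAH)$; this is where R\'enyi enters. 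The feasible set $\mathcal{K}_h$ is convex by the mixture-policy construction of Lemma \ref{lemma:regularityK}. Let $x$ realise $\max_\pi d_h^\pi$ at the target pair (index $1$) and $z = d_h^{\pi^{(h)}}$ be the R\'enyi maximiser. The segment $(1-\lambda)x + \lambda z$ lies in $\mathcal{K}_h$, so concavity of $H_\alpha$ and optimality of $z$ force $\partial_\lambda H_\alpha((1-\lambda)x+\lambda z)\big|_{\lambda=1}\ge 0$. Expanding the derivative yields $\sum_i z_i^\alpha \ge \sum_i z_i^{\alpha-1}x_i \ge (x_1/z_1)^{1-\alpha}x_1^\alpha$, hence $x_1/z_1 \le (n^{1-\alpha}/x_1^\alpha)^{1/(1-\alpha)} \le SA/\delta^{\alpha/(1-\alpha)}$ using $\sum_i z_i^\alpha \le n^{1-\alpha}$ and $x_1 \ge \delta$. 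Since $\mu_h \ge z/H$ by the construction of $\varpi$, the coverage lemma follows.

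With the coverage lemma in hand, Cauchy--Schwarz converts the $\pi$-expectation to a $\mu_h$-expectation at the cost of a factor $\sqrt{SAH/\delta^{\alpha/(1-\alpha)}}$, and a Hoeffding bound combined with a uniform covering over value functions $G:\mathcal{S}\to[0,H]$ gives $\mathbb{E}_{\mu_h}|(\hat{\mathbb{P}}_h - \mathbb{P}_h)G|^2 \le \tilde O(H^2 S/M)$. Summing over $h$, terms (a) and (c) are each at most
\[
O\!\left(\sqrt{\tfrac{H^5 S^2 A}{M\,\delta^{\alpha/(1-\alpha)}}\log\tfrac{HM}{p}}\right) + SAH^2\delta.
\]
Optimising $\delta$ to balance these contributions is where $\beta = \alpha/[2(1-\alpha)]$ appears in the final exponent; requiring the total to be $\le \epsilon$ reproduces the stated $M$. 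For term (b), I would run natural policy gradient on softmax-parameterised policies under $\hat{\mathbb{P}}$; a mirror-descent analysis on the softmax iterates yields $\hat J(\hat\pi^*;r) - \hat J(\hat\pi^{(T)};r) \le H\log A/(\eta T) + \eta H^2$, so with $\eta = \sqrt{\log A/(HT)}$ and $T = O(H^3\log A/\epsilon^2)$ iterations this is $\le \epsilon$. Combining (a), (b), (c) produces the claimed $3\epsilon$ guarantee with failure probability $\le p$.

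The main obstacle is the quantitative coverage lemma. Qualitatively one expects entropy maximisers to spread out, but extracting a polynomial lower bound matching the tail exponent of $H_\alpha$ requires both the non-obvious convexity of $\mathcal{K}_h$ in state-action distribution space (rescued by lifting to mixture policies) and a first-order argument tailored to the power-law form of $H_\alpha$. Critically, the coverage factor $\delta^{\alpha/(1-\alpha)}$ and hence the bound blow up as $\alpha \to 1^-$, explaining why Shannon entropy is insufficient and why R\'enyi with small $\alpha$ is the right objective. A secondary subtlety is that $\hat V^\pi_{h+1}$ in the simulation expansion is itself data-dependent, so I would replace pointwise Hoeffding with a uniform concentration over all value functions of range $[0,H]$ via an $\epsilon$-net, paying only a logarithmic price.
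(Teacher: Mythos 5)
You have proved the wrong statement. The lemma you were asked to address is only the NPG optimization bound, $J(\pi^*;r)-J(\pi^{(T)};r)\le \frac{H\log A}{\eta T}+\eta H^2$, together with the verification that the stated choices of $\eta$ and $T$ make this $\le\epsilon$. Nearly all of your proposal instead reconstructs the proof of Theorem \ref{theorem:main} --- the error decomposition, the $\delta$-significance argument, the convexity of $\mathcal{K}_h$, the first-order coverage bound from the R\'enyi maximizer, the Cauchy--Schwarz and concentration steps, and the balancing of $\delta$. That reconstruction is in fact quite faithful to the paper's Appendix \ref{app:proof}, but it is not the statement at hand. For the lemma itself you offer only the single clause that ``a mirror-descent analysis on the softmax iterates yields'' the bound, which is an invocation, not a proof.

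To be fair, the paper does exactly the same thing: Lemma \ref{lemma:MDP_solver} is imported from \citet{agarwal2019optimality} (as adapted by \citet{jin2020reward}) with no proof given, so your deferral is on par with the source. If a self-contained argument were demanded, you would need to carry out the NPG regret analysis for softmax-parameterized policies in the finite-horizon setting: the performance-difference lemma plus the KL-divergence potential over the $T$ iterates yields the $\frac{H\log A}{\eta T}$ term, and the quadratic remainder with advantages bounded by $H$ yields the $\eta H^2$ term. The parameter check you gesture at does go through: with $\eta=\sqrt{\log A/(HT)}$ both terms equal $\sqrt{H^3\log A/T}$, so the bound is $2\sqrt{H^3\log A/T}$, and $T=4H^3\log A/\epsilon^2$ makes this exactly $\epsilon$. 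You should state that computation explicitly rather than writing $T=O(H^3\log A/\epsilon^2)$, since the constant $4$ is what the lemma asserts.
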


\section{Policy gradient of state-action space entropy}
\label{app:PG}


In the following derivation, we use the notation that sums over the states or the state-action pairs. 
Nevertheless, it can be easily extended to continuous state-action spaces by simply replacing summation by integration.
Note that, although the R\'enyi entropy for continuous distributions is not the limit of the R\'enyi entropy for discrete distributions (since they differs by a constant \cite{sanei2016renyi}), their gradients share the same form.
Therefore, the derivation still holds for the continuous case.

\begin{lemma}[Gradient of discounted state visitation distribution]
\label{lemma:distribution_gradient}
A policy $\pi$ is parameterized by $\theta$. The gradient of the discounted state visitation distribution induced by the policy w.r.t. the parameter $\theta$ is 
\begin{equation}
    \nabla_\theta d_\mu^\pi (s') = \frac{1}{1-\gamma} \mathbb{E}_{(s, a)\sim d_\mu^\pi} [\nabla_\theta \log \pi(a|s) d_{s,a}^\pi(s')]
\end{equation}
\end{lemma}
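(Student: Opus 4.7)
The plan is to mimic the classical policy-gradient derivation, but applied to the discounted state occupancy rather than to an expected return. Let $p_t(s):=\Pr(s_t=s\mid s_0\sim\mu,\pi)$ and $q_t(s'\mid s,a):=\Pr(s_t=s'\mid s_0=s,a_0=a,\pi)$ for $t\ge 1$, so that $d_\mu^\pi(s')=(1-\gamma)\sum_{t\ge 0}\gamma^t p_t(s')$ and the state-marginal of $d_{s,a}^\pi$ equals $(1-\gamma)\mathbb{I}[s'=s]+(1-\gamma)\sum_{m\ge 1}\gamma^m q_m(s'\mid s,a)$. The goal is to reduce $\nabla_\theta d_\mu^\pi(s')$ to a product of two such series.

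First, I would differentiate the one-step forward recursion $p_{t+1}(s')=\sum_{s,a}p_t(s)\pi(a\mid s)\mathbb{P}(s'\mid s,a)$ and apply the log-derivative identity $\nabla_\theta \pi(a\mid s)=\pi(a\mid s)\nabla_\theta\log\pi(a\mid s)$. Writing $T$ for the policy-induced transition operator and setting $B_t(s'):=\sum_{s,a}p_t(s,a)\nabla_\theta\log\pi(a\mid s)\mathbb{P}(s'\mid s,a)$, this yields the affine recursion $\nabla_\theta p_{t+1}=T\,\nabla_\theta p_t+B_t$. Since $\nabla_\theta p_0=\nabla_\theta\mu=0$, an induction gives $\nabla_\theta p_t(s')=\sum_{k=0}^{t-1}(T^{t-1-k}B_k)(s')$, and collapsing $\mathbb{P}(\cdot\mid s,a)$ composed with the $(t-1-k)$-step policy transition into $q_{t-k}(\cdot\mid s,a)$ produces the double sum
\[
\nabla_\theta p_t(s')=\sum_{k=0}^{t-1}\sum_{s,a}p_k(s,a)\,\nabla_\theta\log\pi(a\mid s)\,q_{t-k}(s'\mid s,a).
\]

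Next, I would multiply by $(1-\gamma)\gamma^t$, sum over $t\ge 1$, and swap the order of summation via $m:=t-k\ge 1$. The double series factors as
\[
\nabla_\theta d_\mu^\pi(s')=\sum_{s,a}\nabla_\theta\log\pi(a\mid s)\Bigl[(1-\gamma)\textstyle\sum_{k\ge 0}\gamma^k p_k(s,a)\Bigr]\Bigl[\textstyle\sum_{m\ge 1}\gamma^m q_m(s'\mid s,a)\Bigr].
\]
The first bracket is exactly $d_\mu^\pi(s,a)$, and the second bracket equals $\tfrac{1}{1-\gamma}d_{s,a}^\pi(s')-\mathbb{I}[s'=s]$ by splitting off the $m=0$ term in the definition of $d_{s,a}^\pi$.

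The final step is to dispose of the boundary term $-\mathbb{I}[s'=s]$. Collecting it gives $-\sum_a d_\mu^\pi(s',a)\nabla_\theta\log\pi(a\mid s')=-d_\mu^\pi(s')\sum_a \nabla_\theta\pi(a\mid s')=-d_\mu^\pi(s')\nabla_\theta 1=0$, which leaves precisely the claimed identity after rewriting the sum as an expectation under $(s,a)\sim d_\mu^\pi$. The main obstacle I anticipate is purely bookkeeping: keeping the indices in the unrolled recursion aligned (in particular the off-by-one between $p_t$ and $q_m$) so that the $t=0$ self-loop term in $d_{s,a}^\pi$ is correctly identified and cancelled using the score-function identity. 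Everything else (linearity of $\nabla_\theta$, Fubini on the double series, which converges absolutely because both factors are finite geometric-type sums for $\gamma<1$) is routine.
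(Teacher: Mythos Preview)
Your argument is correct, but it follows a different route from the paper. The paper unrolls a Bellman-type recursion for the \emph{state-action--started} occupancy $d_{s_0,a_0}^\pi(s')=(1-\gamma)\mathbb{I}[s_0=s']+\gamma\sum_{s_1,a_1}P^\pi(s_1,a_1\mid s_0,a_0)\,d_{s_1,a_1}^\pi(s')$, differentiates through the product $P(s_1\mid s_0,a_0)\pi(a_1\mid s_1)d_{s_1,a_1}^\pi(s')$, and iterates; this yields $\nabla_\theta d_{s_0,a_0}^\pi(s')=\tfrac{1}{1-\gamma}\mathbb{E}_{(s,a)\sim d_{s_0,a_0}^\pi}[\nabla_\theta\log\pi(a\mid s)\,d_{s,a}^\pi(s')]-\nabla_\theta\log\pi(a_0\mid s_0)\,d_{s_0,a_0}^\pi(s')$. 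The boundary term then cancels \emph{structurally} against the score contribution that appears when one finally averages $(s_0,a_0)$ over $\mu(s_0)\pi(a_0\mid s_0)$, and the tower property collapses the nested expectations to $d_\mu^\pi$. By contrast, you work in the time domain with the per-step marginals $p_t$, unroll the affine recursion $\nabla_\theta p_{t+1}=T\nabla_\theta p_t+B_t$, and only afterwards take the discounted sum, so that the two geometric series factor directly into $d_\mu^\pi(s,a)$ and (almost) $d_{s,a}^\pi(s')$; your leftover $-\mathbb{I}[s'=s]$ term is then killed by the score-function identity rather than by cancellation. Your route is arguably more elementary (no need to introduce $d_{s_0,a_0}^\pi$ as an intermediate object or invoke the tower property), while the paper's recursion on occupancies mirrors the standard policy-gradient derivation for $Q$-functions and makes the connection to Bellman equations more transparent. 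Both reach the same identity and both rely, in the end, on $\sum_a\nabla_\theta\pi(a\mid s)=0$.
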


\begin{proof}
The discounted state visitation distribution started from one state-action pair can be written as the distribution started from the next state-action pair.

$$
d_{s_0, a_0}^\pi (s') = (1-\gamma) \mathbb{I}(s_0 =s') + \gamma \sum_{s_1, a_1} P^\pi(s_1,a_1|s_0,a_0) d_{s_1, a_1}^\pi (s'),
$$
where $P^\pi(s',a'|s,a): = \mathbb{P}(s'|s,a)\pi(a'|s')$.

Then, the proof follows an unrolling procedure on the discounted state visitation distribution as follows:

\begin{equation*}
\begin{aligned}
\nabla_\theta d^\pi_{s_0, a_0}(s') & = \sum_{s_1,a_1}\gamma \nabla_\theta \left[ P(s_1|s_0,a_0) \pi(a_1|s_1) d_{s_1, a_1}^\pi(s') \right] \\
& =\sum_{s_1,a_1} \gamma P(s_1|s_0,a_0) \left[\nabla_\theta \pi(a_1|s_1) d_{s_1, a_1}^\pi(s') + \pi(a_1|s_1) \nabla_\theta d_{s_1, a_1}^\pi(s') \right] \\
& =\sum_{s_1,a_1} \gamma P^\pi(s_1,a_1|s_0,a_0) \left[\nabla_\theta \log \pi(a_1|s_1) d_{s_1, a_1}^\pi(s') + \nabla_\theta d_{s_1, a_1}^\pi(s') \right] \quad \text{(unroll)} \\
& = \frac{1}{1-\gamma} \sum_{s,a} \sum_{t=1}^\infty \gamma^t P(s_t=s, a_t=a|s_0, a_0; \pi) [\nabla_\theta \log \pi(a|s) d_{s,a}^\pi (s')] \\
& = \frac{1}{1-\gamma} \mathbb{E}_{(s,a)\sim d_{s_0,a_0}^\pi} [\nabla_\theta \log \pi(a|s) d_{s,a}^\pi (s')] - \nabla_\theta \log \pi(a_0|s_0) d_{s_0,a_0}^\pi (s').
\end{aligned}
\end{equation*}

At last,

\begin{equation*}
    \nabla_\theta d_\mu^\pi(s') = \mathbb{E}_{s\sim\mu, a_0 \sim \pi(\cdot|s_0)} [\nabla_\theta \log \pi(a_0|s_0) d_{s_0,a_0}^\pi (s') + \nabla_\theta d_{s_0,a_0}^\pi (s')] = \frac{1}{1-\gamma} \mathbb{E}_{(s,a)\sim d_\mu^\pi} [\nabla_\theta \log \pi(a|s) d_{s,a}^\pi (s')].
\end{equation*}
\end{proof}


\begin{lemma}[Policy gradient of the Shannon entropy over the state-action space]
\label{lemma:pg_shannon}
Given a policy $\pi$ parameterized by $\theta$, the gradient of $H_1(d_\mu^\pi)$ w.r.t. the parameter $\theta$ is as follows:
\begin{equation}
\label{eq:pg_shannon_lemma}
    \nabla_\theta H_1(d_\mu^\pi) = \mathbb{E}_{(s,a)\sim d_\mu^\pi} \left[ \nabla_\theta \log \pi(a|s) \left( \frac{1}{1-\gamma} \langle d_{s,a}^\pi, -\log d_\mu^\pi  \rangle -\log d_\mu^\pi(s,a) \right) \right].
\end{equation}
\end{lemma}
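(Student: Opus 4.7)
The plan is to differentiate the Shannon entropy directly using its definition and then substitute an explicit formula for $\nabla_\theta d_\mu^\pi(s,a)$ obtained by combining Lemma \ref{lemma:distribution_gradient} with the chain rule. First, I would expand
\[
\nabla_\theta H_1(d_\mu^\pi) = -\sum_{s,a} \bigl[\nabla_\theta d_\mu^\pi(s,a)\bigr]\log d_\mu^\pi(s,a) - \sum_{s,a} \nabla_\theta d_\mu^\pi(s,a),
\]
and observe that the second sum vanishes because $\sum_{s,a} d_\mu^\pi(s,a) = 1$ and so its gradient is $0$. Thus only the term $\sum_{s,a}\nabla_\theta d_\mu^\pi(s,a)\cdot(-\log d_\mu^\pi(s,a))$ remains to be analyzed.

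Next, I would establish a state-action analogue of Lemma \ref{lemma:distribution_gradient}. Writing $d_\mu^\pi(s',a') = d_\mu^\pi(s')\pi(a'|s')$ and differentiating, Lemma \ref{lemma:distribution_gradient} gives
\[
\nabla_\theta d_\mu^\pi(s',a') = \frac{1}{1-\gamma}\mathbb{E}_{(s,a)\sim d_\mu^\pi}\!\bigl[\nabla_\theta \log \pi(a|s)\, d_{s,a}^\pi(s',a')\bigr] + d_\mu^\pi(s',a')\,\nabla_\theta \log \pi(a'|s'),
\]
where I use $d_{s,a}^\pi(s',a') = d_{s,a}^\pi(s')\pi(a'|s')$ and the log-derivative trick on the second term. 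Plugging this into the previously derived expression, interchanging the order of summation in the first piece, and recognizing the inner sum as $\langle d_{s,a}^\pi,-\log d_\mu^\pi\rangle$ gives
\[
\nabla_\theta H_1(d_\mu^\pi) = \mathbb{E}_{(s,a)\sim d_\mu^\pi}\!\left[\nabla_\theta \log \pi(a|s)\left(\frac{1}{1-\gamma}\langle d_{s,a}^\pi, -\log d_\mu^\pi\rangle - \log d_\mu^\pi(s,a)\right)\right],
\]
which is exactly \eqref{eq:pg_shannon_lemma}.

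The main obstacle I anticipate is the bookkeeping around the state-action version of the distribution gradient: the given lemma is stated for the state distribution, so I must either re-do the unrolling argument carefully for $d_\mu^\pi(s',a')$ (accounting for the extra $\pi(a'|s')$ factor on the terminal step) or justify the reduction via $d_\mu^\pi(s',a')=d_\mu^\pi(s')\pi(a'|s')$ cleanly. A minor additional check is that all interchanges of $\nabla_\theta$ with the infinite sum defining $d_\mu^\pi$ and with the expectations are valid, which in the discrete setting is routine but should be stated; for continuous state-action spaces, the same derivation goes through verbatim with summations replaced by integrals, since only the form of the gradient, not the entropy's constant offset, is relevant.
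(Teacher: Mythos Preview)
Your proposal is correct and follows essentially the same approach as the paper: differentiate the entropy to reduce to $-\sum_{s,a}\nabla_\theta d_\mu^\pi(s,a)\log d_\mu^\pi(s,a)$, then lift Lemma~\ref{lemma:distribution_gradient} to the state-action distribution via the factorization $d_\mu^\pi(s',a')=d_\mu^\pi(s')\pi(a'|s')$ and substitute. The only cosmetic difference is that the paper writes $\nabla_d H_1(d)=-\log d - c$ and drops the constant via the score-function identity, whereas you obtain $-\log d - 1$ directly and drop the $1$ via $\nabla_\theta\sum_{s,a}d_\mu^\pi(s,a)=0$; these are the same observation.
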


\begin{proof}
First, we can calculate the gradient of Shannon entropy $\nabla_d H(d) = (-\log d - c)$ where $d$ is the state-action distribution and $c = - \sum_{s,a} \log d(s,a)$ is a normalization factor.
Then, we observe that 
$
\nabla_\theta d_\mu^\pi(s,a) = \nabla_\theta [d_\mu^\pi(s) \pi(a|s)]
= \nabla_\theta d_\mu^\pi(s) \pi(a|s) + d_\mu^\pi(s, a) \nabla_\theta \log \pi(a|s)
$.

Using the standard chain rule and Lemma \ref{lemma:distribution_gradient}, we can obtain the policy gradient:
\begin{equation*}
\begin{aligned}
\nabla_\theta H_1(d_\mu^\pi) = & \sum_{s, a} \nabla_\theta d_\mu^\pi(s,a) 
\dfrac{\partial H_1(d_\mu^\pi)}{\partial d_\mu^\pi(s,a)}
= \sum_{s, a} \nabla_\theta d_\mu^\pi(s,a) (-\log d_\mu^\pi(s,a) - c ) 
= - \sum_{s, a} \nabla_\theta d_\mu^\pi(s,a) \log d_\mu^\pi(s,a)\\ 
= & - \frac{1}{1-\gamma} \sum_{s',a'} d_\mu^\pi (s',a') \nabla_\theta \log \pi (a'|s') \sum_{s,a} d_{s',a'}^\pi (s,a) \log d_\mu^\pi(s,a) -  \sum_{s,a} d_\mu^\pi(s,a) \nabla_\theta \log \pi(a|s) \log d_\mu^\pi(s,a) \\
= & \mathbb{E}_{(s,a)\sim d_\mu^\pi} \left[ \nabla_\theta \log \pi(a|s) \left( \frac{1}{1-\gamma} \langle d_{s,a}^\pi, -\log d_\mu^\pi  \rangle -\log d_\mu^\pi(s,a) \right) \right], \\
\end{aligned}
\end{equation*}
where $\langle d_{s,a}^\pi, -\log d_\mu^\pi\rangle$ is the cross-entropy between the two distributions.
We use the fact $\mathbb{E}_{x\sim p} \left[c \nabla_\theta \log p(x) \right] = 0$ in the first line.
\end{proof}




\begin{lemma}[Policy gradient of the R\'enyi entropy over the state-action space]
\label{lemma:pg_renyi}
Given a policy $\pi$ parameterized by $\theta$ and $0<\alpha<1$, the gradient of $H_\alpha(d_\mu^\pi)$ w.r.t. the parameter $\theta$ is as follows:
\begin{equation}
\label{eq:pg_renyi_lemma}
    \nabla_\theta H_\alpha(d_\mu^\pi) \propto 
    \frac \alpha {1-\alpha} \mathbb E_{(s,a)\sim d_\mu^\pi}\left[\nabla_\theta\log\pi(a|s)\left(
    \frac{1}{1-\gamma}
    \langle d_{s,a}^\pi, (d_\mu^\pi)^{\alpha-1}
    \rangle
    +\left(d_\mu^\pi(s,a)\right)^{\alpha-1}\right)\right].
\end{equation}
\end{lemma}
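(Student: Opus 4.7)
The plan is to mirror the proof of Lemma \ref{lemma:pg_shannon} with $-\log(\cdot)$ replaced by the function that arises from differentiating R\'enyi entropy. Concretely, both entropies can be written as a smooth scalar function composed with $d_\mu^\pi$, so by the chain rule the gradient splits into (i) a piece that propagates through the environment dynamics, which is handled by Lemma \ref{lemma:distribution_gradient}, and (ii) a piece from directly differentiating $\pi(a\mid s)$ appearing in the factorization $d_\mu^\pi(s,a)=d_\mu^\pi(s)\pi(a\mid s)$.

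First, I would write $Z:=\sum_{s,a}(d_\mu^\pi(s,a))^\alpha$, so that $H_\alpha(d_\mu^\pi)=\frac{1}{1-\alpha}\log Z$, and apply the chain rule to obtain
\begin{equation*}
\nabla_\theta H_\alpha(d_\mu^\pi)=\frac{\alpha}{(1-\alpha)Z}\sum_{s,a}(d_\mu^\pi(s,a))^{\alpha-1}\,\nabla_\theta d_\mu^\pi(s,a).
\end{equation*}
This already accounts for the prefactor $\tfrac{\alpha}{1-\alpha}$ and reveals $1/Z$ as the proportionality constant hidden in the symbol $\propto$ in the statement.

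Next, I would use the identity $\nabla_\theta d_\mu^\pi(s,a)=\pi(a\mid s)\nabla_\theta d_\mu^\pi(s)+d_\mu^\pi(s,a)\nabla_\theta\log\pi(a\mid s)$, exactly as in the Shannon-entropy derivation. Plugging this into the sum above splits the expression into two sums. The second sum directly gives
\begin{equation*}
\sum_{s,a} d_\mu^\pi(s,a)\,(d_\mu^\pi(s,a))^{\alpha-1}\nabla_\theta\log\pi(a\mid s)=\mathbb{E}_{(s,a)\sim d_\mu^\pi}\bigl[\nabla_\theta\log\pi(a\mid s)\,(d_\mu^\pi(s,a))^{\alpha-1}\bigr],
\end{equation*}
which is the second term inside the parentheses of \eqref{eq:pg_renyi_lemma}. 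For the first sum I would invoke Lemma \ref{lemma:distribution_gradient} to replace $\nabla_\theta d_\mu^\pi(s)$ by $\frac{1}{1-\gamma}\mathbb{E}_{(s',a')\sim d_\mu^\pi}[\nabla_\theta\log\pi(a'\mid s')\,d_{s',a'}^\pi(s)]$, swap the order of summation, and use the factorization $d_{s',a'}^\pi(s)\pi(a\mid s)=d_{s',a'}^\pi(s,a)$ to obtain $\sum_{s,a} d_{s',a'}^\pi(s,a)(d_\mu^\pi(s,a))^{\alpha-1}=\langle d_{s',a'}^\pi,(d_\mu^\pi)^{\alpha-1}\rangle$. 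Relabelling $(s',a')\leftarrow(s,a)$ then produces exactly the first term inside the parentheses of \eqref{eq:pg_renyi_lemma}, and adding the two contributions completes the derivation.

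The calculation is essentially routine once Lemma \ref{lemma:distribution_gradient} is in hand; no new ideas beyond those used in Lemma \ref{lemma:pg_shannon} are required. The only mild subtlety I anticipate is formal: because $\alpha<1$, the function $(d_\mu^\pi(s,a))^{\alpha-1}$ is unbounded where the state-action distribution vanishes, so the manipulations should be understood on the support of $d_\mu^\pi$ (analogous to how $-\log d_\mu^\pi$ is treated in the Shannon case). With that caveat, every step is a direct linear rearrangement and the proof is complete.
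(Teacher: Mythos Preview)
Your proposal is correct and follows essentially the same route as the paper: both compute $\partial H_\alpha/\partial d$ via the chain rule, split $\nabla_\theta d_\mu^\pi(s,a)$ into the $\nabla_\theta d_\mu^\pi(s)\,\pi(a\mid s)$ and $d_\mu^\pi(s,a)\nabla_\theta\log\pi(a\mid s)$ pieces, and then invoke Lemma~\ref{lemma:distribution_gradient} on the first piece. The only cosmetic difference is that the paper writes the entropy gradient with an additive normalizer $c$ that immediately vanishes (since $\sum_{s,a}\nabla_\theta d_\mu^\pi(s,a)=0$), whereas you bypass this and directly identify $1/Z$ as the proportionality constant.
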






\begin{proof}
First, we note that the gradient of R\'enyi entropy is
\[
\nabla_d H_\alpha(d) = \dfrac{\alpha}{1-\alpha} \dfrac{d^{\alpha-1}-c}{\sum_{i=1}^n d_i^\alpha},
\]
where $c$ is a constant normalizer. Given a vector $d=(d_1, \cdots, d_n)^T$ and a real number $a$, we use $d^a$ to denote the vector $(d_1^a, \cdots, d_n^a)^T$.
Similarly, using the chain rule and Lemma \ref{lemma:distribution_gradient}, we have
\begin{equation*}
\begin{aligned}
\nabla_\theta H_\alpha(d_\mu^\pi) &=\sum_{s, a} \nabla_\theta d_\mu^\pi(s,a) 
\dfrac{\partial H_\alpha(d_\mu^\pi)}{\partial d_\mu^\pi(s,a)}
\propto\frac \alpha {1-\alpha}  \sum_{s, a} \nabla_\theta d_\mu^\pi(s,a) \left[ (d_\mu^\pi(s,a))^{\alpha-1} - c\right] \\
&= \frac \alpha {1-\alpha}  \sum_{s, a} \nabla_\theta d_\mu^\pi(s,a) (d_\mu^\pi(s,a))^{\alpha-1}
\\
&= \frac \alpha {1-\alpha} \bigg( \frac{1}{1-\gamma} \sum_{s',a'}d_\mu^\pi(s',a')\nabla_\theta\log\pi(a'|s')
\sum_{s,a}
d_{s',a'}^\pi(s,a) (d_\mu^\pi(s,a))^{\alpha-1} \\
& \quad \quad + \sum_{s,a} d_\mu^\pi(s,a)\nabla_\theta\log\pi(a|s) (d_\mu^\pi(s,a))^{\alpha-1}
\bigg) \\
&=  \frac \alpha {1-\alpha} \mathbb E_{s,a\sim d_\mu^\pi}\left[\nabla_\theta\log\pi(a|s)\left(
\frac{1}{1-\gamma}
\langle d_{s,a}^\pi, (d_\mu^\pi)^{\alpha-1}
\rangle
+\left(d_\mu^\pi(s,a)\right)^{\alpha-1}\right)\right].
\end{aligned}
\end{equation*}

\end{proof}

We should note that when $\alpha \to 0$, $\nabla_d H_\alpha(d) \to 0$; and when $\alpha \to 1$, $\nabla_d H_\alpha(d) \to -\log d$.
Especially, we can write down the case for $\alpha = \frac{1}{2}$:
\begin{equation}
    \nabla_\theta H_\alpha(d_\mu^\pi) = \mathbb{E}_{s,a\sim d_\mu^\pi} \left[ \nabla_\theta \log \pi(a|s) 
    \left( 
    \frac{1}{1-\gamma} \langle d_{s,a}^\pi, \sqrt{\dfrac{1}{d_\mu^\pi}} \rangle + \sqrt{\dfrac{1}{d_\mu^\pi (s,a)}} \right) \right]
\end{equation}

This leads to a policy gradient update with reward $\propto (d_\mu^\pi)^{-1/2}$. This inverse-square-root style reward is frequently used as the exploration bonus in standard RL problems \cite{strehl2008analysis,bellemare2016unifying} and similar to the UCB algorithm \cite{auer2002finite,lai1985asymptotically} in the bandit problem.

\section{Practical implementation}
\label{app:explain}


Recall the deduced policy gradient formulations of the entropies on a CMP in \eqref{eq:pg_shannon_lemma} and \eqref{eq:pg_renyi_lemma}.
We observe that if we set the reward function $r(s,a) = (d_\mu^\pi(s,a))^{\alpha-1}$ (when $0<\alpha<1$) or $r(s,a) = -\log d_\mu^\pi(s,a)$ (when $\alpha\to 1$), these policy gradient formulations can be compared with the policy gradient for a standard MDP: The first term resembles the policy gradient of cumulative reward $\mathbb{E}_{(s,a)\sim d_\mu^\pi}[\nabla_\theta \log \pi(a|s) Q^\pi(s,a)]$ and the second term resembles the policy gradient of instant reward $\mathbb{E}_{(s,a)\sim d_\mu^\pi}[\nabla_\theta \log \pi(a|s) r(s,a)]$.


For the first term, a wide range of existing policy gradient based algorithms can be used to estimate this term. 
Since these algorithms such as PPO do not retain the scale of the policy gradient, we induce a coefficient $\eta$ to adjust for the scale between this term and the second term.

For the second term, when $\alpha=1$,
this term can be reduced to
\[
\mathbb{E}_{(s,a)\sim d_\mu^\pi}[\nabla_\theta \log \pi(a|s) (-\log d_\mu^\pi(s,a))] = \nabla_\theta \mathbb{E}_{s\sim d_\mu^\pi} [H_1(\pi(\cdot|s))],
\]
where $H_1(\pi(\cdot|s))$ denotes the Shannon entropy over the action distribution conditioned on the state sample $s$. 
Although the two sides are equivalent in expectation, the right hand side leads to smaller variance when estimated using samples due to the following reasons: 
First, the left hand side is calculated with the values of a density estimator (that estimates $d_\mu^\pi$) on the state-action samples and the density estimator suffers from high variance. 
In contrast, the right hand side does not involve the density estimation. 
Second, to estimate the expectation with samples, the left hand side uses both the state samples and the action samples while the right hand side uses only the state samples and the action distributions conditioned on these states.
This reduces the variance in the estimation.
When $0<\alpha<1$, 
since the second term in equation 
\eqref{eq:PG_alpha}
can hardly be reduced to such a simple form, we still use $\nabla_\theta \mathbb{E}_{s\sim d_\mu^\pi} [H_1(\pi(\cdot|s))]$ as the approximation.



\section{Value function estimator}
\label{app:value}


In this section, we provide the detail of the truncated TD($\lambda$) estimator \cite{sutton2018introduction} that we use in our algorithm.
Recall that in the $i$-th iteration, we maintain the replay buffer $\mathcal{D} = \mathcal{D}_i \cup \mathcal{D}_{i-1} \cup \cdots \cup \mathcal{D}_{i-n+1}$ as $\mathcal{D}=\{(s_{j1},a_{j1}), \cdots, (s_{jN_j}, a_{jN_j})\}_{j=1}^{nm}$ where $N_j$ is the length of the $j$-th trajectory and $nm$ is the total number of trajectories in the replay buffer.
Correspondingly, the reward calculated using the latest density estimator $P_\phi$ is denoted as $r_{jk} = (P_\phi(s_{jk},a_{jk}))^{\alpha-1}$ or $r_{jk} = -\log P_\phi(s_{jk},a_{jk})$ for each $(s_{jk},a_{jk}) \in \mathcal{D}$ 
, $\forall j\in[nm], k\in[N_j]$. 

The truncated n-step value function estimator can be written as:
\[
V^\text{nstep}_p(s_{jk}) := \sum_{\tau=k}^{h-1}
r_{j\tau} + 
V_\psi(s_{jh}) \text{,   with } h=\min(N_j, k+p).
\]
where $V_\psi$ is the current value function network with the parameter $\psi$.
Notice that, since the discount rate is considered when we collect sample, here we estimate the undiscounted sum.
When $p\ge N_j-k$ (i.e., $h=N_j$), this corresponds to the Monte Carlo estimator whose variance is high. 
When $p=1$ (i.e., $h=k+1$), this corresponds to the TD(0) estimator that has large bias.
To balance variance and bias, we use the truncated TD($\lambda$) estimator which is the weighted sum of the truncated n-step estimators with different $p$, i.e.,
\[
V^\text{targ}_\lambda (s_{jk}) := (1-\lambda) \sum_{p=1}^{P-1} \lambda^{p-1} V^\text{nstep}_p(s_{jk}) + \lambda^{P-1} V^\text{nstep}_P(s_{jk}),
\]
where $\lambda \in [0,1]$ and $P\in \mathbb{N}^+$ are the hyperparameters.

\section{Experiment settings, hyperparameters and more experiments.}
\label{app:exp}

\subsection{Experiment settings and hyperparameters}

Throughout the experiments, we set the hyperparameters as follows: The coefficient that balances the two terms in the policy gradient formulation is $\eta=10^{-4}$. 
The parameter for computing GAE and the target state values is $\lambda=0.95$.
The steps for computing the target state values is $P=15$.
The learning rate for the policy network is $4 \times 10^{-3}$ and the learning rate for VAE and the value function $10^{-3}$.

In \texttt{Pendulum} and \texttt{FourRooms} (see Appendix \ref{app:convergence}), 
we use $\gamma=0.995$ and store samples of the latest 10 iterations in the replay buffer.
To estimate the corresponding R\'enyi entropy of the policy during the training in the top of Figure \ref{fig:pendulum_fourrooms}, we sample several trajectories in each iteration. 
In each trajectory, we terminate the trajectory on each step with probability $1-\gamma$.
For $\alpha=0.5$ and $1$, we sample 1000 trajectories to estimate the distribution and then calculate the R\'enyi entropy over the state-action space.
For $\alpha=0$, we sample 50 groups of trajectories. 
In the $i$-th group, we sample 20 trajectories and count the number of unique state-action pairs visited $C_i$.
We use $\frac{1}{50} \sum_{i=1}^{50} \log C_i$ as the estimation of the R\'enyi entropy when $\alpha=0$.
The corresponding entropy of the optimal policy is estimated in a same manner.
The bottom figures in Figure \ref{fig:pendulum_fourrooms} is plotted using 100 trajectories, each of which is also collected in a same manner. 

In \texttt{MultiRooms}, 
we use $\gamma=0.995$ and store samples of the latest 10 iterations in the replay buffer.
We use the \texttt{MiniGrid-MultiRoom-N6-v0} task from minigrid. 
We fix the room configuration for a single run of the algorithm and use different room configurations in different runs. 
The four actions are turning left/right, moving forward and opening the door.
In the planning phase, we randomly pick $20$ grids and construct the corresponding reward functions. 
The reward function gives the agent a reward $r=1$ if the agent reaches the corresponding state and gives a reward $r=0$ otherwise.
We show the means and the standard deviations of five runs with different random seeds in the figures.

For Atari games, 
we use $\gamma=0.995$ and store samples of the latest 20 iterations in the replay buffer.
For Mujoco tasks, we use $\gamma=0.995$ and only store samples of the current iteration.
In Figure \ref{fig:montezuma}b, the two reward functions in the planning phase are designed as follows:
Assume the rooms are numbered in the default order by the environment (cf. the labels in Figure \ref{fig:montezuma}b). 
Under the first reward function, when the agent goes from Room 3 to Room 9 (the room below Room 3) for the first time, we assign a unit reward to the agent. Otherwise, we assign zero rewards to the agent.
Similarly, under the second reward function, we give a reward to the agent when it goes from Room 7 to Room 13 (the room below Room 7).

We will release the source code for this experiments once the paper is published.

\subsection{Convergence of MaxRenyi}
\label{app:convergence}
Convergence is one of the most critical properties of an algorithm. 
The algorithm based on maximizing Shannon entropy has a convergence guarantee leveraging the existing results on the convergence of standard RL algorithms and its similar form to the standard RL problem (noting that $H_1(d_\mu^\pi) = \langle d_\mu^\pi, -\log d_\mu^\pi \rangle$ and $J(\pi) = \frac{1}{1-\gamma} \langle d_\mu^\pi, r \rangle$) \cite{hazan2018provably}.
On the other hand, MaxRenyi does not have such a guarantee not only due to the form of R\'enyi entropy but also due to the techniques we use.
Therefore, we present a set of experiments here to illustrate \emph{whether our exploration algorithm MaxRenyi empirically converges to near-optimal policies in terms of the entropy over the state-action space}.
To answer this question, we compare the entropy induced by the agent during the training in MaxRenyi with the maximum entropy the agent can achieve.
We implement MaxRenyi for two simple environments, {Pendulum} and {FourRooms}, 
where we can solve for the maximum entropy by brute force search.
{Pendulum} from OpenAI Gym \cite{1606.01540} has a continuous state-action space.
For this task, the entropy is estimated by discretizing the state space into $16\times 16$ grids and the action space into two discrete actions.
{FourRooms} is a $11\times 11$ grid world environment with four actions and deterministic transitions. 
We show the results in Figure \ref{fig:pendulum_fourrooms}. 
The results indicate that our exploration algorithm approaches the optimal in terms of the corresponding state-action space entropy with different $\alpha$ in both discrete and continuous settings.

\begin{figure*}[t]
    \centering
    \begin{subfigure}{0.45\textwidth}
    \centering
    \includegraphics[width=\textwidth]{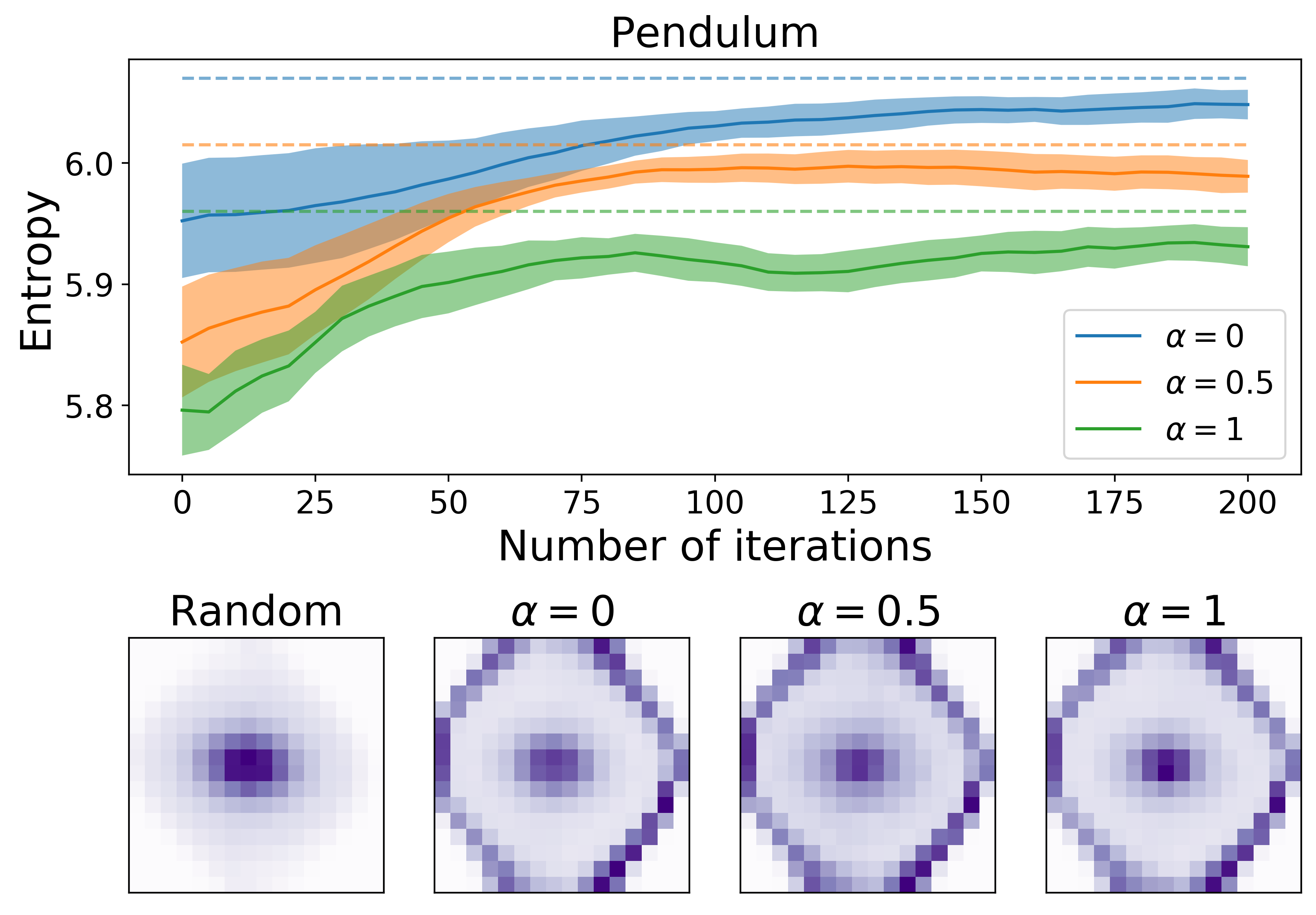}
    \end{subfigure}
    \hfill
    \begin{subfigure}{0.45\textwidth}
    \centering
    \includegraphics[width=\textwidth]{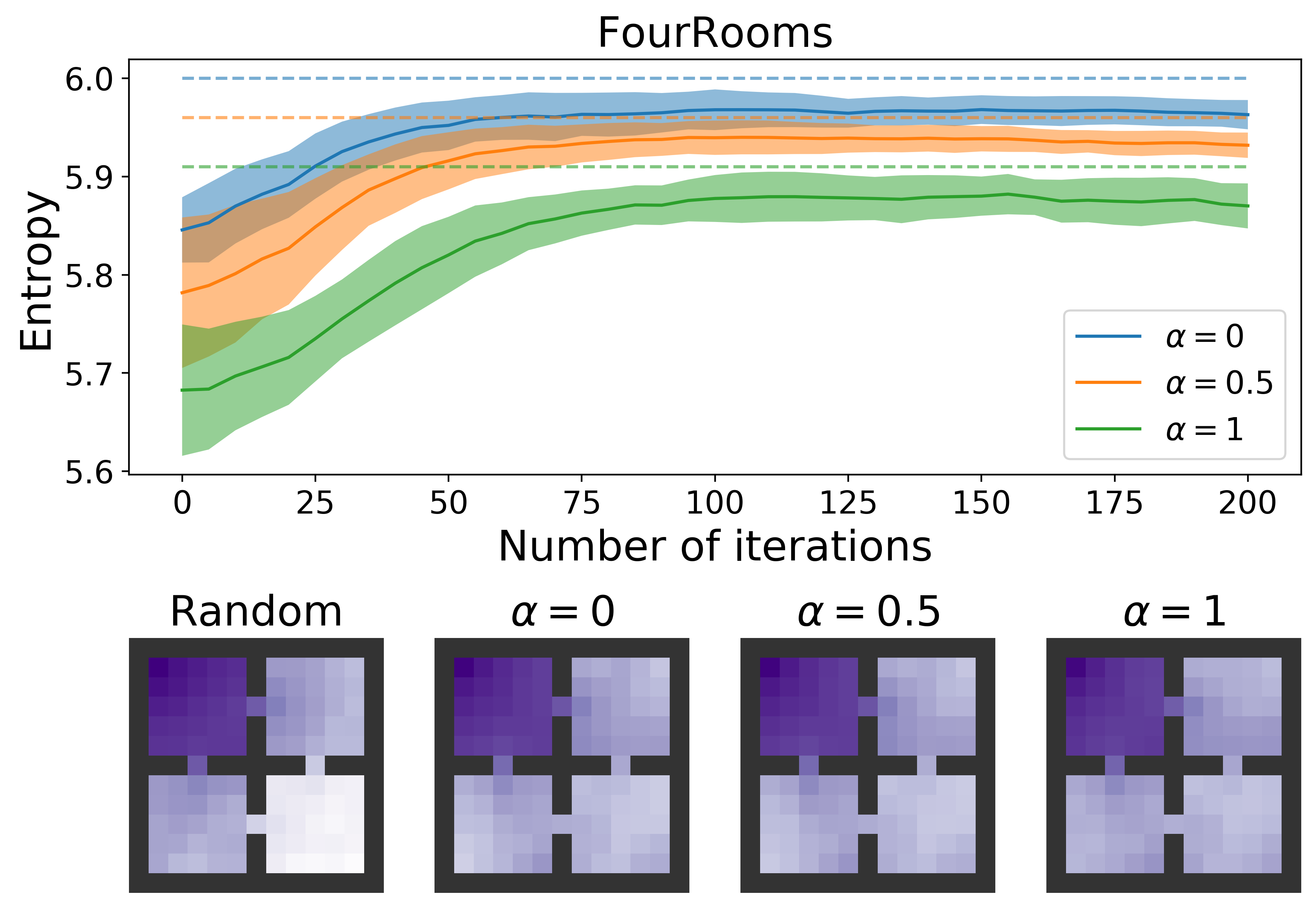}
    \end{subfigure}
    \caption{
    Top. The corresponding R\'enyi entropy of the state-action distribution induced by the policies during the training using our exploration algorithm with different $\alpha$.
    The dashed lines indicate the corresponding maximum entropies the agent can achieve.
    Bottom. The discounted state distributions of the learned policies compared with the random policy.}
    \label{fig:pendulum_fourrooms}
\end{figure*}

\subsection{More results for Atari games and Mujoco tasks}

The reward-free RL framework contains two phases, the exploration phase and the planning phase.
Due to space limit, we only show the final performance of the algorithms (i.e., the performance of the agent planned based on the data collected using the learned exploratory policy) in the main text. 
Here, we also show the performance of the exploration algorithms in the exploration phase (i.e., the performance for the reward-free exploration setting).
We show the results in Table \ref{table:3}.
First, the scores for the exploration phase are measured using the extrinsic game rewards of the tasks. 
However, the exploration algorithms are trained in the reward-free exploration setting (i.e., do not receive such extrinsic game rewards).
The results indicate that, although MaxRenyi is designed for the reward-free RL framework, it outperforms other exploration algorithms in the reward-free exploration setting.
Then, 
the average scores for the planning phase is the same as those in Table \ref{table:2}. In addition, we present the standard deviations over five random seeds. 
The results indicate that the stability of MaxRenyi is comparable to RND and ICM while achieving better performance on average.

\begin{table}[h]
\centering
\begin{tabular}{rrrrrr} 
& & MaxRenyi & RND & ICM & Random \\
\hline
\multirow{7}{*}{Exploration phase} & Montezuma & 
\textbf{7,540(2210)} & 4,220(736) & 1,140(1232) & 0 \\
& Venture & 
460(273) & \textbf{740(546)} & 120(147) & 0 \\
& Gravitar & 
\textbf{1,420(627)} & 1,010(886) & 580(621) & 173 \\
& PrivateEye & 
\textbf{6,480(7,813)} & 6,220(7,505) & 100(0) & 25 \\
& Solaris & 
\textbf{1,284(961)} & 1,276(712) & 1,136(928) & 1,236 \\
& Hopper & 
\textbf{207(628)} & 72(92) & 86(75) & 31 \\
& HalfCheetah & 
\textbf{456(328)} & 137(126) & 184(294) & -82 \\
\hline
\multirow{7}{*}{Planning phase} & Montezuma & 
\textbf{8,100(1,124)} & 4,740(2,216) & 380(146) & 0(0) \\
& Venture & 
\textbf{900(228)} & 840(307) & 240(80) & 0(0) \\
& Gravitar & 
1,850(579) & \textbf{1,890(665)} & 940(281) & 340(177) \\
& PrivateEye & 
\textbf{6,820(7,058)} & 3,040(5,880) & 100(0) & 100(0) \\
& Solaris & 
\textbf{2,192(1,547)} & 1,416(734) & 1,844(173) & 1,508(519) \\
& Hopper & 
\textbf{973(159)} & 819(87) &  704(59) & 78(36) \\
& HalfCheetah & 
\textbf{1,466(216)} & 1,083(377) & 700(488) & 54(50) \\
\end{tabular}
\caption{
The performance of the exploratory policies (\emph{Exploration phase}) and the planned policies under the reward-free RL framework (\emph{Planning phase}) corresponding to different exploratory algorithms. 
Random indicates using a random policy as the exploratory policy. 
We use $\alpha=0.5$ in MaxRenyi. 
The numbers before and in the parentheses indicate the means and the standard deviations over five random seeds respectively.}
\label{table:3}
\end{table}

\subsection{More results for Montezuma's Revenge}

Although the extrinsic game reward in Montezuma's Revenge aligns with the coverage of the state space, the number of rooms visited in one episode may be a more direct measure for an exploration algorithm. 
Therefore, we also provide the curves for the number of rooms visited during the reward-free exploration.
Compared with Figure \ref{fig:montezuma}a, we observe that, although MaxRenyi visits only slightly more rooms than RND, it yields higher returns by visiting more scoring states. 
This is possibly because our algorithm encourages the agent to visit the hard-to-reach states more.

\begin{figure*}[t]
    \centering
    \includegraphics[width=0.4\textwidth]{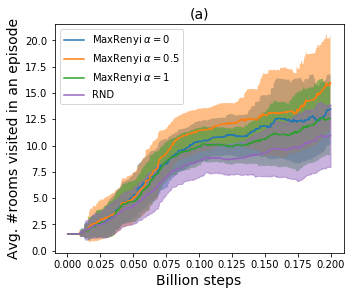}
    \caption{
    The average number of rooms visited in an episode along during the reward-free exploration.
    }
    \label{fig:montezuma_num_rooms}
\end{figure*}

Moreover, our algorithm is less likely to fail under different reward functions in the planning phase. There is a failure mode for the planning phase in Montezuma's Revenge: For example, with the reward to go across Room 3, the trained agent may first go to Room 7 to obtain an extra key and then return to open the door toward Room 0, which is not the shortest path. Out of 5 runs, our algorithms fail 2 times where RND fails all the 5 times. 

\end{alphasection}

\end{document}